\newlength\titlebox 
\newtheorem{proposition}{Proposition}
\newtheorem{theorem}{Theorem}
\newtheorem{lemma}{Lemma}
\newtheorem{corollary}{Corollary}
\newenvironment{tablehere}
  {\def\@captype{table}}
  {}
\title{The Choice Function Framework for Online Policy Improvement}
\author{Murugeswari Issakkimuthu, Alan Fern, Prasad Tadepalli  \\
School of EECS\thanks{E-mail: issakkim, alan.fern, prasad.tadepalli@oregonstate.edu} \\
Oregon State University \\
Corvallis, OR 97331 USA}
\date{}
\begin{document}
\maketitle

\begin{abstract}
There are notable examples of online search improving over hand-coded or learned policies (e.g. AlphaZero) for sequential decision making. It is not clear, however, whether or not policy improvement is guaranteed for many of these approaches, even when given a perfect evaluation function and transition model. Indeed, simple counter examples show that seemingly reasonable online search procedures can hurt performance compared to the original policy. To address this issue, we introduce the choice function framework for analyzing online search procedures for policy improvement. A choice function specifies the actions to be considered at every node of a search tree, with all other actions being pruned. Our main contribution is to give sufficient conditions for stationary and non-stationary choice functions to guarantee that the value achieved by online search is no worse than the original policy. 
In addition, we describe a general parametric class of choice functions that satisfy those conditions and present an illustrative use case of the framework's empirical utility.
\end{abstract}

\section{Introduction}
\label{sec:intro}
For many applications of sequential decision making, it is possible to learn or hand-code a reactive policy for online operation. While such policies are computationally cheap to apply, they will generally be sub-optimal and even highly sub-optimal in some states. 
This motivates using additional computation during online operation to improve upon such base policies. The focus of this paper is on approaches that use online lookahead search for this purpose, which we refer to as Online Search for Policy Improvement (OSPI). 

At each state encountered during operation, OSPI approaches use an environment simulator or model to construct a search tree that includes the base policy's action choices along with a subset of off-policy choices. The action values at the root can then be used to select an action. Well-known examples of OSPI include the policy-rollout algorithm \cite{tesauro1997}, which was first shown to improve Backgammon policies, and AlphaZero \cite{Silver1140}, which improves over an underlying greedy policy via Monte-Carlo Tree Search.

Ideally, due to the additional online computation, we would like an OSPI procedure to yield improved performance compared to the base policy. Perhaps more importantly, we would at least like to guarantee that an OSPI procedure is ``safe" in the sense that it does not perform worse than just using the base policy. For example, the policy rollout algorithm is guaranteed to be safe in this sense. However, as we show in Section \ref{sec:setup}, many OSPI procedures are not safe, even when 1) using a perfect model, 2) using the exact policy value function for leaf evaluation, and 3) the policy's actions are expanded at each tree node.    

Our primary goal is to derive safety conditions for OSPI. For this purpose, we introduce the choice-function framework for analyzing OSPI procedures. The key idea is to notice that OSPI procedures primarily differ in their choice of which off-policy actions to expand. Thus, each procedure can be characterized by a choice function, which specifies the actions to consider at each node of the tree. Thus, we can characterize properties of an OSPI procedure, such as safety, via properties of the corresponding choice function. 

Our main contribution is to give sufficient conditions on choice functions that guarantee safeness. This is done for both stationary and non-stationary choice functions. In addition, we describe a parametric class of safe choice functions, that captures a number of existing approaches. This allows for hyper-parameter search over a safe space of OSPI procedures in order to optimize online performance. Using this class we provide illustrative empirical results that demonstrate the practical potential of the framework.  

\section{Related Work}
\label{sec:related}
An early approach for OSPI is the  policy-rollout algorithm \cite{Bertsekas:1996:NP:560669,tesauro1997}, 
which has been shown to significantly improve policies in a variety of applications, e.g. Backgammon \cite{tesauro1997}, combinatorial optimization \cite{bertsekas1997combinatorial} and stochastic scheduling \cite{bertsekas1999scheduling}. Nested rollout \cite{yan2005solitaire,cazenave2009nested} allows for leveraging additional computation time to further improve a policy by approximating multiple steps of policy iteration. Policy Switching \cite{Chang:2004:PRO:990742.990814} allows rolling out multiple policies instead of just one and improves over all the base policies.

Monte-Carlo Tree Search (MCTS) has commonly used policies as a form of knowledge to guide and prune the search, often as part of the rollout policy applied at the leaves \cite{browne2012survey}.  Recent, high-profile examples include AlphaGo and AlphaZero \cite{silver2016alphago,silver2017gozero,Silver1140}, which combine a learned base policy and value function to guide MCTS. One view of AlphaZero's search approach is as OSPI, where the search aims to improve over the learned greedy base policy. Indeed, the basis for learning is to use search to generate training data from a (hopefully) improved policy.  A related approach \cite{pinto2017} uses a learned policy to prune actions from consideration at each tree node that are not highly ranked by the policy. Another example of combining MCTS with policies \cite{nguyen2014bootstrapping} allows the base policy to be treated as a temporally extended action at each node in the search tree. 

The idea of searching around a base policy has also been considered in the area of deterministic heuristic search. Limited discrepancy search (LDS) \cite{harvey1995limited} uses a heuristic to define a greedy policy for guiding search. LDS generates all paths in the search tree that disagree with at most $K$ choices of the base policy and returns the best solution uncovered by the search. LDS has been used effectively in a variety of search problems ranging from standard benchmarks to structured prediction \cite{doppa2014structured} and non-deterministic AND/OR search graphs \cite{larrosa2016limited}.

\section{Problem Setup}
\label{sec:setup}
We formulate sequential decision making in the formalism of Markov Decision Processes (MDPs). An MDP is a $4$-tuple $\langle S, A, P, R \rangle$, where $S$ is a finite set of states, $A$ is a finite set of actions, $P: S \times A \times S \rightarrow [0,1]$ is the state-transition function and $R: S \times A \rightarrow \mathbb{R}$ is the reward function. $P_{ss'}(a)$ denotes the probability of reaching state $s'$ from state $s$ by taking action $a$ and $R(s,a)$ denotes the immediate reward for taking action $a$ in state $s$. A policy, $\pi : S \rightarrow A$, is a mapping from states to actions with value function $V^{\pi}$. We focus on the discounted infinite-horizon setting with discount factor $\gamma$. Offline computation of an optimal policy can be done in polynomial time in the size of the state space, which is impractical for typical applications. In such cases, online search is a practical alternative to offline solutions. \\

{\bf Online Search for MDPs.} In online search, actions need to be selected only for the states actually encountered in the environment. At each decision point, online search typically constructs a finite-horizon search tree rooted at the current state.  A search tree alternates between layers of state and action nodes. The leaves of a search tree are  state nodes and are often evaluated via a state evaluation function. The tree is used to estimate action values at the root and the action with the highest estimate is executed in the environment.
  
When a model of the MDP is available, an expectimax tree can be built that assigns exact probabilities to child states of actions. For large enough search depths or accurate leaf evaluations, near-optimal actions can be selected. In some applications, only a simulator of the MDP is available, which allows for sampling state transitions and rewards. Monte-Carlo sampling can then be used to construct an approximation to the exact tree by sampling a number of child states for each action node. The Sparse Sampling algorithm \cite{Kearns:2002:SSA:599616.599698} follows this approach and guarantees near optimal action selection in time independent of the state-space size. Monte-Carlo Tree Search (MCTS) algorithms \cite{browne2012survey} also use simulators for online search, typically producing non-uniform depth trees. \\
\begin{figure}[t]
\begin{center}
    \includegraphics[width=0.5\textwidth, height=0.19\textheight]{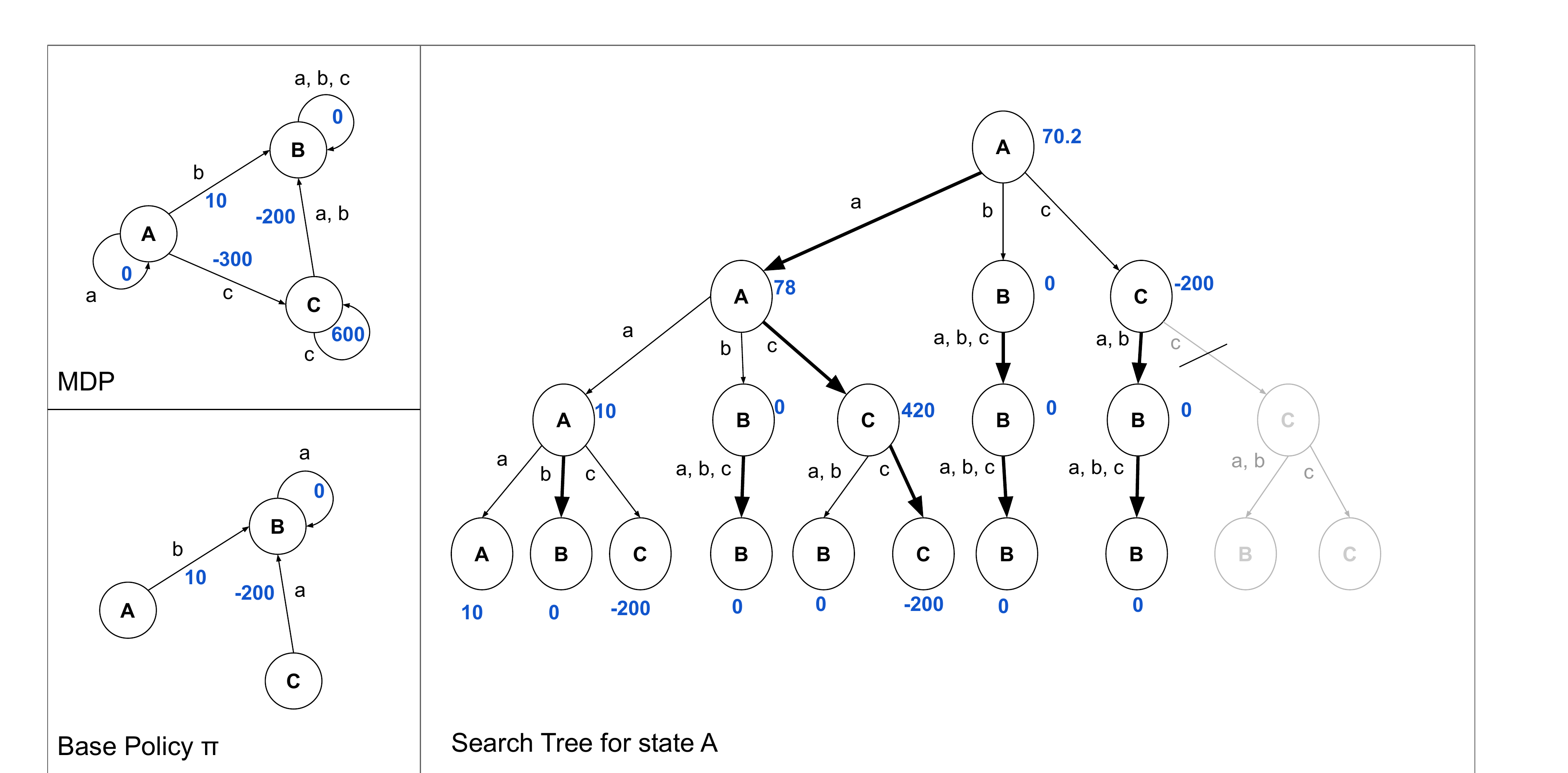}
    \vspace{-1em}
    \caption{Search tree from a search procedure for shown MDP. Each leaf is assigned the value of the shown base policy and state nodes include the action of the policy. The grayed out part of the tree is the part not expanded by the search. 
    The values shown for nodes have  been computed using a discount factor of 0.9. 
    The text describes how the choice of the tree (action $a$) is not $\pi$-safe due to the pruning.}
    \label{fig:1}
    \vspace{-0.5em}
\end{center}
\end{figure}

{\bf Online Search for Policy Improvement.}  In practice, online computational constraints can limit the search tree size, which can lead to poor performance of online search. To address this issue, it is common to learn or provide different types of prior knowledge into the search process. For example, the search depth can be reduced by utilizing higher-quality leaf evaluation functions, or the search breadth can be reduced via action pruning functions.

While such knowledge sources can reduce computational cost, there are typically no guarantees on the value achieved by the online search procedure. Most theoretical results aim to guarantee near optimal performance (e.g. \cite{Kearns:2002:SSA:599616.599698}), but require impractical computational costs. Rather, it is desirable to develop approaches that support performance guarantees within practical computational limits. This motivates the framework of OSPI.

OSPI aims for performance guarantees relative to a base policy $\pi$. The policy may be learned or hand-coded, but is assumed to be computationally cheap to apply. While $\pi$ could be directly used for online action selection, this may not fully use the computational resources. OSPI aims to leverage those resources to improve over $\pi$ via online search to explore the decision space around $\pi$. Formally, an OSPI procedure is \emph{$\pi$-safe} when its online performance is guaranteed to be as good or better than that of $\pi$. That is, if $\pi'$ is the online policy computed by an OSPI procedure, then the procedure is $\pi$-safe if for all states $s$, $V^{\pi'}(s)\geq V^{\pi}(s)$. While safety is a less powerful guarantee compared to near optimality, in practice, it is more attainable and still useful. 

It can be difficult to determine, in general, whether a given OSPI procedure is $\pi$-safe. For example, one might expect that an OSPI procedure that considers the actions of $\pi$ at every tree node and uses $V^{\pi}$ for leaf evaluation might be $\pi$-safe when combined with a perfect environment model. However, this is not the case as the following counter-example shows. The next section develops a framework for assessing the safety of OSPI procedures.

{\bf Counter-example. } Figure \ref{fig:1} shows a deterministic MDP, a base policy $\pi$, and the search tree constructed for state $A$ by an unspecified deterministic search procedure, which produces the same tree each time $A$ is encountered. The tree respects the exact MDP model and the leaf evaluation function is exactly $V^{\pi}$. Further, each node of the search tree includes the action corresponding to the choice of $\pi$. 

The action choice of the search procedure (i.e. the highest valued root actions) will be denoted by the policy $\pi'$. Given the tree properties relative to $\pi$, we might expect that the value of $\pi'$ would be at least as good as $\pi$. For state $A$, the base policy selects $\pi(A)=b$ and the corresponding value of state $A$ under $\pi$ is $V^{\pi}(A) = 10$. However, the online search suggests the alternative action $\pi'(A)=a$, which results in a lower value of $V^{\pi'}(A)=0$. Thus, the online search procedure is not $\pi$-safe, at least for state $A$.

To understand the failure to be $\pi$-safe at state $A$, consider using the search tree to make a decision at state $A$ at time step $t$. The reason action $a$ looks best is that the state-action sequence $A\rightarrow a \rightarrow A \rightarrow c \rightarrow C \rightarrow c \rightarrow C$ achieves a high value due to the 600 reward of the final transition. However, after actually taking action $a$ and ending up in state $A$ again at time step $t+1$ the tree does not include the promising path of $A\rightarrow c \rightarrow C \rightarrow c \rightarrow C$, due to pruning of the lower levels of the search tree. Thus, at time step $t+1$ the search procedure does not recognize the value of taking action $c$ in $A$ and takes $a$ again. This is just one of several failure-mode types of OSPI procedures, even when their trees satisfy the assumptions of this example relative to $\pi$.

\section{The Choice Function Framework}
\label{sec:framework}
Search trees encode the future trajectories to be considered when 
evaluating actions at a state. 
Search algorithms vary in how they expand the paths, which results in different  search trees and hence different action values. Thus, one way to characterize online search approaches is by describing the trees they construct. In our framework, this is done using choice functions, which allows for properties, such as safety of search, to be analyzed via choice-function properties.

\subsection{Choice Functions for General Online Search}
Search trees have two sources of branching: 1) action branching and 2) state branching. Choice functions describe the action branching by specifying the subset of possible action choices to be considered at each state node. Leaf nodes are assigned the empty set of choices. For state branching, we assume an exact MDP model so that all non-zero probability child states of an action node are included in the tree. When a model is not available, but a simulator is,  sparse sampling can be used to approximate the dynamics.

State and action nodes in a tree are identified by paths that list the alternating sequence of states and actions starting at the root state. The path of a state node labeled with state $s$ will be denoted by $p;s$ where $p$ is the path starting at the root leading to the parent action of the state node. Action nodes will often similarly be denoted by $p;s;a$, where $p;s$ designates the parent state node. The length of any path denoted by $|p|$ is the number of actions that it contains. Thus, a path corresponding to a single state $s$ has length zero. The set of all state paths is denoted by $\mathcal{SP}$ and a \emph{choice function} is a mapping $\psi: \mathcal{SP} \rightarrow 2^A$ from state paths to action subsets. 

In order to define the trees associated with a choice function, several definitions are needed. A path is \emph{$\psi$-satisfying} if all of its actions are ``allowed" by $\psi$. That is, for each prefix $p';s';a'$ of the path, we have $a' \in \psi(p';s')$. A state path $p;s$ is a \emph{leaf path} of $\psi$ if it is $\psi$-satisfying and $\psi(p;s)=\emptyset$. Note that a leaf path of $\psi$ cannot be extended to a $\psi$-satisfying path. A choice function $\psi$ is \emph{finite horizon} if there is a finite upper bound on the length of any $\psi$-satisfying state path. For finite-horizon $\psi$, the \emph{horizon} $H(\psi)$ is the maximum length of any $\psi$-satisfying path, or equivalently, of any leaf path.

Given a current, or root state, $s_0$, the tree corresponding to $\psi$, denoted $T^{\psi}(s_0)$, is the tree containing exactly the $\psi$-satisfying state paths that begin with $s_0$. Thus, the leaves of $T^{\psi}(s_0)$ correspond to leaf paths of $\psi$. The tree will be finite when $\psi$ is finite horizon, with $H(\psi)$ bounding the depth of any leaf. In this paper, we will restrict attention to finite-horizon choice functions and hence finite trees.

To use the tree $T^{\psi}(s_0)$ for action selection at state $s_0$, it is necessary to specify a \emph{leaf evaluation function} $u$, which is a function of states $u: S \rightarrow \mathbb{R}$. Often $u$ will be a learned or hand-coded function that provides an estimate of a state's optimal value or value under a policy. Alternatively, $u$ may be uninformative and return a constant value. Together, a choice function $\psi$ and leaf evaluation function $u$ allow us to define the value  of each state node $p;s$ in $T^{\psi}(s_0)$, denoted $V^{\psi}_u(p;s)$, and each action node $p;s;a$, denoted $Q^{\psi}_u(p;s;a)$.
\begin{small}
\begin{eqnarray*}
V^\psi_u(p;s) &=& \left \{
\begin{array}{lr}
u(s), & \psi(p;s) = \emptyset \\
\displaystyle \max_{a \in  \psi(p;s)} Q^{\psi}_u(p;s;a), & \text{\emph{ otherwise}} \\
\end{array}
\right. \\
Q^{\psi}_u(p;s;a) &=& R(s,a) + \gamma \displaystyle \sum_{s' \in S} P_{ss'}(a)\cdot V^\psi_u(p;s;a;s')
\end{eqnarray*}
\end{small}

\noindent The online-search action policy, denoted $\Pi^{\psi}_u$, returns a maximum valued action at state $s$ allowed by $\psi$, with ties broken arbitrarily: 
$\Pi^\psi_u(s) = \arg \displaystyle \max_{a \in \psi(s)} Q^{\psi}_u(s;a)$.

\subsection{Choice Functions for OSPI}
Given a base policy $\pi$ we would like to define choice functions and corresponding leaf-evaluation functions that result in (approximately) $\pi$-safe OSPI procedures. That is, we would like to guarantee that $\Pi^{\psi}_u$ is $\pi$-safe. Section \ref{sec:performanceguarantees} develops sufficient conditions on choice functions to give such a guarantee. First, however, we provide examples of choice functions for several existing OSPI procedures whose safety will later be assessed according to the conditions.\\

\textbf{Policy Rollout.} This simple OSPI procedure \cite{tesauro1997} returns the action at state $s$ that maximizes a Monte-Carlo estimate of $Q^\pi(s,a)$. This estimate can be viewed as evaluating a tree that considers all actions at the root $s$ and then only contains the actions of $\pi$ thereafter until some horizon $H$. Policy rollout can thus be characterized by the following choice function. 

\begin{small}
\begin{flalign*}
\psi_{\text{ro}}(p;s) = \left \{
\begin{array}{lr}
A, & |p|  = 0 \\
\pi(s), & 0< |p| < H \\
\emptyset, & \text{otherwise} \\
\end{array}
\right.
\end{flalign*}
\end{small}
Policy rollout can be proven to be $\pi$-safe as it corresponds to the policy improvement step of the policy iteration algorithm. Our $\pi$-safe conditions will imply this for $\psi_{\text{ro}}$.\\

\textbf{Limited Discrepancy Search (LDS).} This procedure was originally introduced for deterministic offline search problems \cite{harvey1995limited}. LDS searches around $\pi$ by limiting the number of off-policy actions (discrepancies) along every root-to-leaf path to $K$ up to some maximum horizon $H$. The idea was later extended to offline non-deterministic AND/OR tree/graph search \cite{larrosa2016limited} using a similar limit on discrepancies. LDS for MDPs is easily captured via the following choice function, where $\#[p \not= \pi]$ is the number of off-policy actions in path $p$. 
\begin{small}
\begin{flalign*}
\psi_{lds}(p;s) = \left \{
\begin{array}{lr}
A, & |p| < H \text{ and } \#[p \not= \pi] < K \\
\pi(s), & |p| < H \text{ and } \#[p \not= \pi] = K \\
\emptyset, & |p|=H \\
\end{array}
\right.
\end{flalign*}
\end{small}
Our conditions will imply that $\psi_{\text{lds}}$ is $\pi$-safe.\\

\textbf{Pruned Online Search with Learned Policies.} RL algorithms typically learn policies that select actions by maximizing an action ranking function, such as a Q-function or probability distribution over actions. Such action rankings can be used for action pruning in online tree search. Let $q(p;s, a)$ be the learned action ranking function, which may depend on the full path $p;s$ (e.g. when $q$ is a recurrent neural network) or depend only on $s$. A simple pruning approach, e.g. as studied in \cite{pinto2017}, allows only the set of top $k$ actions at each search node, denoted $\text{TOP}_{q,k}(p;s)$, as captured by the following choice function.  %
\begin{small}
\begin{flalign*}
\psi_{q,k}(p;s) = \left \{
\begin{array}{lr}
\text{TOP}_{q,k}(p;s), & |p| < H  \\
\emptyset, & |p|=H \\
\end{array}
\right.
\end{flalign*}
\end{small}
As discussed in Section \ref{sec:LDCF}, our results will help clarify conditions on $q$ that ensure safety. 
\section{Performance Guarantee}
\label{sec:performanceguarantees}
Our goal is to identify properties of a choice function $\psi$ that guarantee $\Pi^\psi_u$ is approximately $\pi$-safe. That is, we seek to bound $V^\pi(s) - V^{\Pi^\psi_u}(s)$. A natural property to suggest is that $\psi$ be consistent with $\pi$. A choice function $\psi$ is $\pi$-consistent if for each state path $p;s$, $\pi(s) \in \psi(p;s)$. Our counter example (Section \ref{sec:setup}), however, is based on a $\pi$-consistent choice function, since all tree nodes include $\pi$. Thus, $\pi$-consistency of $\psi$ is not sufficient for $\pi$-safety, requiring the introduction of additional concepts and notation. 

We will often treat value functions as vectors, indexed by states, with arithmetic and comparison operators being applied element-wise. The max-norm of a vector $\lVert V \rVert_\infty$ returns the maximum absolute value of the elements. The \emph{min-horizon} of $\psi$, denoted $h(\psi)$, is the minimum depth of any leaf node in $T^{\psi}$. Given a path $p;s$ we let $\lrcorner p;s$ denote the path obtained by removing the first state action pair of $p;s$. We say that $\psi$ is \emph{ monotonic} if $\psi(p;s) \subseteq \psi(\lrcorner p;s)$ for all $p;s \in \mathcal{SP}$. We can now give our main result. \\
\begin{theorem}
\label{thm:stationary}
For any MDP, discount factor $\gamma$, and policy $\pi$, if $\psi$ is $\pi$-consistent and monotonic and $\lVert u - V^\pi \rVert_\infty \leq \epsilon$, then for $\pi'=\Pi^{\psi}_u$,
\begin{small}
\begin{flalign*}
V^\pi - V^{\pi'} \leq \dfrac{2 \epsilon \gamma^{h(\psi)}}{1 - \gamma}. \hspace{5em}
\end{flalign*}
\end{small}
\end{theorem}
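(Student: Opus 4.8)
The plan is to analyze the vector of root tree values $\widehat{V}(s) := V^\psi_u(s)$ (the value assigned to $T^\psi(s)$ at its length-zero root path) and to sandwich it between $V^\pi$ and the true value $V^{\pi'}$ of the online policy $\pi' = \Pi^\psi_u$. I would split the loss as $V^\pi - V^{\pi'} = (V^\pi - \widehat{V}) + (\widehat{V} - V^{\pi'})$ and bound each piece separately, leaning on $\pi$-consistency for the first and on monotonicity for the second.

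For the first piece I would show $\widehat{V} \geq V^\pi - \gamma^{h(\psi)}\epsilon$. Because $\psi$ is $\pi$-consistent, $\pi(s) \in \psi(p;s)$ at every node, so one admissible strategy inside $T^\psi(s)$ is to follow $\pi$ all the way down to a leaf, and the tree's maximizing value can only do at least as well. Comparing the value of this $\pi$-following strategy with $V^\pi(s)$, the two agree on all accumulated discounted rewards and differ only in the leaf term $\gamma^L(u - V^\pi)$ evaluated at the (path-dependent) leaf depth $L$. Since $|u - V^\pi| \leq \epsilon$ pointwise and every leaf satisfies $L \geq h(\psi)$, this difference is at most $\gamma^{h(\psi)}\epsilon$ in magnitude, giving $\widehat{V}(s) \geq V^\pi(s) - \gamma^{h(\psi)}\epsilon$.

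The second piece is where monotonicity does the work and repairs the counterexample. Fix $s$ and let $a = \pi'(s)$; unfolding one level gives $\widehat{V}(s) = R(s,a) + \gamma\sum_{s'} P_{ss'}(a)\, V^\psi_u(s;a;s')$, so I must compare the subtree value $V^\psi_u(s;a;s')$ hanging off the chosen child with the value $\widehat{V}(s')$ of the freshly re-rooted tree $T^\psi(s')$. The subtree at $s'$ is governed by the shifted choice function $q;t \mapsto \psi(s;a;q;t)$, which by monotonicity is contained in $\psi(q;t)$ and, by $\pi$-consistency of $\psi$, is itself $\pi$-consistent; hence the subtree is exactly a choice-function restriction of the fresh tree, and the two recursions coincide except at nodes the subtree prunes prematurely (declaring a leaf while the fresh tree keeps expanding) -- precisely the promising path the counterexample lost after acting. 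At each such node the subtree reports $u$ while the fresh tree reports a deeper lookahead that, by the previous paragraph applied to that node (whose remaining horizon is at least one), lies within $(1+\gamma)\epsilon$ of it, and every such node sits at depth at least $h(\psi)-1$ below $s'$. Propagating these local gaps up through the contractive $\max$/expectation recursion yields $\widehat{V}(s') \geq V^\psi_u(s;a;s') - (1+\gamma)\epsilon\,\gamma^{h(\psi)-1}$.

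Substituting this back shows $T^{\pi'}\widehat{V} \geq \widehat{V} - (1+\gamma)\epsilon\,\gamma^{h(\psi)}$, where $T^{\pi'}$ is the one-step Bellman backup for $\pi'$. Since $T^{\pi'}$ is monotone and subtracting a constant $c$ from every entry scales to $\gamma c$ under one backup, iterating from $\widehat{V}$ and passing to the fixed point $V^{\pi'} = \lim_n (T^{\pi'})^n \widehat{V}$ accumulates a geometric series and gives $V^{\pi'} \geq \widehat{V} - (1+\gamma)\epsilon\,\gamma^{h(\psi)}/(1-\gamma)$. Combining with $\widehat{V} \geq V^\pi - \gamma^{h(\psi)}\epsilon$ and simplifying $\gamma^{h(\psi)}\epsilon + (1+\gamma)\epsilon\,\gamma^{h(\psi)}/(1-\gamma) = 2\epsilon\,\gamma^{h(\psi)}/(1-\gamma)$ yields the claim. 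I expect the main obstacle to be making the third paragraph rigorous: formalizing the shifted-choice-function containment, proving by induction that the subtree is a restriction whose only discrepancies are prematurely pruned leaves, and bookkeeping both their depth ($\geq h(\psi)-1$) and per-leaf gap ($(1+\gamma)\epsilon$) so the constants collapse exactly to the stated bound; the rest is the standard contraction argument.
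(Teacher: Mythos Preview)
Your proposal is correct and follows the same overall architecture as the paper's proof: the same split $V^\pi - V^{\pi'} = (V^\pi - \widehat V) + (\widehat V - V^{\pi'})$, the same reduction of the second piece to the one-step inequality $\widehat V - B_{\pi'}\widehat V \leq (1+\gamma)\gamma^{h(\psi)}\epsilon$ followed by a geometric-series/contraction argument (the paper's Lemma~1), and the same final arithmetic. The only organizational difference is in how the key comparison between $V^\psi_u(s;a;s')$ and $\widehat V(s')$ is handled. The paper first transfers both quantities from $u$ to $\bar u = V^\pi$ via a stand-alone contraction lemma (Lemma~3), then invokes the exact monotonicity statement $V^\psi_{\bar u}(\lrcorner p;s) \geq V^\psi_{\bar u}(p;s)$ (Lemma~2), which is proved only for $u=V^\pi$. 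You instead keep the approximate $u$ throughout, observe that the shifted tree is a restriction of the fresh tree whose leaves sit at depth $\geq h(\psi)-1$, bound the per-leaf discrepancy $u(t)-V^\psi_u(q;t)\leq(1+\gamma)\epsilon$ by reusing the $\pi$-following argument from your first piece, and then propagate with contraction. Both routes land on the identical constant.

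One small point of language worth tightening when you formalize: at internal nodes where both trees are non-leaf but the shifted choice set is strictly smaller, the recursions do not literally ``coincide''; rather, the fresh tree maximizes over a superset, so the gap $S-F$ cannot increase there except through propagation from deeper nodes. Your subsequent sentence about propagating through the contractive $\max$/expectation recursion shows you have the right inductive argument in mind; just make sure the write-up states it that way rather than as equality at internal nodes.
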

\noindent This bound implies that in the ideal case when $u=V^{\pi}$, monotonicity and $\pi$-consistency together are sufficient for safety. It also shows that the impact of inaccuracy in $u$ with respect to $V^{\pi}$ decreases exponentially with the min-horizon due to discounting of future returns. 

From the theorem we get an immediate corollary that applies to the set of all policies 
$\psi$ is consistent with,
denoted $\mathcal{C}_{\psi}$, where $\epsilon(u,\pi)= \lVert u - V^\pi \rVert_\infty$. 
\begin{corollary}
\label{corr:lowerbound}
For any MDP, discount factor $\gamma$, choice function $\psi$, and leaf evaluation function $u$, the policy $\pi'=\Pi^{\psi}_u$ satisfies \vspace{-1em}
\begin{small}
\begin{align*}
    V^{\pi'} \geq \displaystyle \max_{\pi \in \mathcal{C}_{\psi}} \bigg [V^\pi - \dfrac{2 \epsilon(u,\pi) \gamma^{h(\psi)}}{1 - \gamma} \bigg ]. \hspace{4em}
\end{align*}
\end{small}
\end{corollary}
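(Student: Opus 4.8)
The plan is to analyze the \emph{root value} $V^\psi_u(s)$ of the freshly built tree $T^\psi(s)$ and to sandwich both $V^\pi$ and the true value $V^{\pi'}$ around it. The engine is a one-step decomposition of $V^\pi(s)-V^{\pi'}(s)$ combined with a max-norm contraction, arranged so that the only appearance of the leaf error $u-V^\pi$ is at tree leaves, which by definition sit at depth at least $h(\psi)$ and are therefore discounted by $\gamma^{h(\psi)}$. I would isolate two building blocks, one powered by $\pi$-consistency and one by monotonicity, and then combine them.

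First building block (\emph{$\pi$-consistency}). Since $\pi(s)\in\psi(p;s)$ at every node, a leaf-to-root induction shows that the value of any node is at least the value obtained by committing to $\pi$'s action everywhere below it. Telescoping $V^\pi$ along the (path-dependent) leaf depth $d$, which satisfies $d\ge h(\psi)$, and using $\lVert u-V^\pi\rVert_\infty\le\epsilon$ at the leaves, gives $V^\psi_u(\text{node})\ge V^\pi(\text{state})-\gamma^{d}\epsilon\ge V^\pi(\text{state})-\gamma^{h(\psi)}\epsilon$. This controls the risk that the search \emph{underrates} the base action.

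Second building block (\emph{monotonicity}). Monotonicity says that shifting a node's path toward the root via $\lrcorner$ only enlarges the admissible action sets, so a leaf-to-root induction gives the key domination inequality $V^\psi_u(s;a;s')\le V^\psi_u(s')+\eta$, i.e.\ a child node inside $T^\psi(s)$ is worth no more than the tree rebuilt from scratch at that child, up to a small error $\eta$. The error arises only at ``discrepancy'' leaves, where the embedded child subtree stops but the deeper fresh tree continues; there the bound $u\le V^\pi+\epsilon$ is compared against the fresh tree's follow-$\pi$ lower bound $\ge V^\pi-\gamma\epsilon$, so each mismatch costs at most $\epsilon(1+\gamma)$, and since such leaves lie at depth at least $h(\psi)-1$ below the child, $\eta\le \epsilon(1+\gamma)\gamma^{h(\psi)-1}$.

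Finally I would combine. Writing $a=\pi(s)$, $a'=\pi'(s)$ and decomposing
\[
V^\pi(s)-V^{\pi'}(s)=\big[V^\pi(s)-Q^\psi_u(s;a)\big]+\big[Q^\psi_u(s;a)-Q^\psi_u(s;a')\big]+\big[Q^\psi_u(s;a')-V^{\pi'}(s)\big],
\]
the middle bracket is $\le 0$ because $a'$ is the tree-greedy action, the first bracket is $\le\gamma^{h(\psi)}\epsilon$ by the $\pi$-consistency block, and the third, after expanding over the successor $s'$ and invoking the monotonicity block, yields a contraction $T(s)\le\gamma\,\mathbb{E}[T(s')]+\epsilon(1+\gamma)\gamma^{h(\psi)}$ for $T(s)=Q^\psi_u(s;\pi'(s))-V^{\pi'}(s)$, whose fixed point is $T\le \epsilon(1+\gamma)\gamma^{h(\psi)}/(1-\gamma)$. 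Summing the two contributions gives $\gamma^{h(\psi)}\epsilon+\epsilon(1+\gamma)\gamma^{h(\psi)}/(1-\gamma)=2\epsilon\gamma^{h(\psi)}/(1-\gamma)$, as claimed. The main obstacle is the monotonicity block: one must carefully reconcile the depth-limited child subtree embedded in $T^\psi(s)$ with the strictly deeper rebuilt tree $T^\psi(s')$, confirm the mismatch only occurs at leaves of depth at least $h(\psi)-1$, and verify that the two error sources combine through the identity $(1-\gamma)+(1+\gamma)=2$ to hit the stated constant exactly rather than a looser multiple.
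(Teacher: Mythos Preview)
Your proposal is correct and mirrors the paper's proof of Theorem~\ref{thm:stationary}, from which the corollary follows immediately by applying the theorem to each $\pi\in\mathcal{C}_\psi$ and maximizing. Your two building blocks and the final decomposition correspond exactly to the paper's Proposition~\ref{prop:chaining} (plus Lemma~\ref{lemma:contraction}) and Lemma~\ref{lemma:qadvantage} (plus Lemma~\ref{lemma:ldiff}), respectively; the only organizational difference is that the paper first proves the monotonicity comparison $V^\psi_u(\lrcorner p;s)\ge V^\psi_u(p;s)$ for the exact case $u=V^\pi$ and then transfers it to general $u$ via the contraction Lemma~\ref{lemma:contraction}, whereas you attempt the approximate comparison directly---your heuristic ``discrepancy-leaf'' accounting lands on the right constant $\epsilon(1+\gamma)\gamma^{h(\psi)-1}$, but the paper's two-step route is cleaner and avoids the delicate depth bookkeeping you flag as the main obstacle.
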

\noindent This implies that for a large min-horizon the online policy is guaranteed to be safe with respect to the best policy that $\psi$ is consistent with. In general, this shows the performance trade-off between larger min-horizons (i.e., minimum search depth) and the closeness of $u$ to a good policy. \\
\subsection{Proof of Theorem \ref{thm:stationary}}
\label{sec:analysis}
All proofs not in the main text are in the appendix. 
The high-level idea of our proof is inspired by the analysis of offline multi-step policy improvement \cite{Bertsekas:1996:NP:560669}. This procedure starts with the value function $V_0 = V^{\pi}$ and then performs $m$ applications of the \emph{Bellman Backup} operator $B$ to get a sequence of value functions $V_i = B[V_{i-1}]$, where $B$ is defined as follows. 
$$B[V](s) = \max_{a\in A} R(s, a) + \gamma \displaystyle \sum_{s' \in S} P_{ss'}(a) \text{ } V(s')$$
The greedy policy $\pi_m$ with respect to the final value function $V_m$ is then returned as the improved policy over $\pi$. The monotonicity of $B$ can be used to guarantee that $V^{\pi_m} \geq V^{\pi}$.  

An OSPI procedure for computing $\pi_m(s)$ is to evaluate a depth $m+1$ tree with root $s$ using $V^{\pi}$ for leaf values. This computes the greedy action with respect to $V_m$, but without synchronously updating all states from the bottom up. Thus, the offline guarantee carries over to OSPI. OSPI with choice functions can be viewed similarly but with backups restricted to actions allowed by the choice function. Our analysis below generalizes these ideas path-sensitive choice functions and approximate leaf values. 

To prove the main result we start by introducing a number of lemmas. 
It will be useful to introduce the \emph{policy-restricted Bellman Backup} operator $B_{\pi}[V]$, which restricts backups to only consider the actions of $\pi$. 
\begin{flalign*}
B_\pi[V](s) = R(s, \pi(s)) + \gamma \displaystyle \sum_{s' \in S} P_{ss'}(\pi(s)) \text{ } V(s').
\end{flalign*}
Lemma \ref{lemma:ldiff} gives a lower bound on $V^\pi$ in terms of a value vector $V$ and how much $B_{\pi}$ decreases the value of $V$.

\begin{lemma}
\label{lemma:ldiff}
For any policy $\pi$ and value vector $V$, if \\ $V - B_{\pi}[V] \leq \delta$, then $V - V^{\pi} \leq \dfrac{\delta}{1 - \gamma}$. 
\end{lemma}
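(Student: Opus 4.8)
The plan is to prove Lemma~\ref{lemma:ldiff} by iterating the operator $B_\pi$ and exploiting the fact that $B_\pi$ is a $\gamma$-contraction whose fixed point is exactly $V^\pi$. The starting observation is that $B_\pi$ is monotone: if $V \leq W$ (element-wise), then $B_\pi[V] \leq B_\pi[W]$, since $B_\pi$ only adds a constant reward and a convex combination (via the transition probabilities) of the argument's entries. A second, equally elementary observation is that $B_\pi$ shifts by constants in a controlled way: for any scalar $c$, $B_\pi[V + c\mathbf{1}] = B_\pi[V] + \gamma c\mathbf{1}$, because the probabilities sum to one and the discount factor $\gamma$ multiplies the propagated value.

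First I would rewrite the hypothesis $V - B_\pi[V] \leq \delta$ as $V \leq B_\pi[V] + \delta\mathbf{1}$ (reading $\delta$ as $\lVert\delta\rVert_\infty$ or treating the bound uniformly as a scalar). Then I would apply $B_\pi$ repeatedly. Using monotonicity followed by the constant-shift identity, one step gives
\begin{small}
\begin{flalign*}
B_\pi[V] \leq B_\pi\big[B_\pi[V] + \delta\mathbf{1}\big] = B_\pi^2[V] + \gamma\delta\mathbf{1}.
\end{flalign*}
\end{small}
Chaining $V \leq B_\pi[V] + \delta\mathbf{1} \leq B_\pi^2[V] + (1+\gamma)\delta\mathbf{1}$ and continuing inductively yields, after $n$ applications,
\begin{small}
\begin{flalign*}
V \leq B_\pi^{n}[V] + \Big(\sum_{i=0}^{n-1}\gamma^{i}\Big)\delta\mathbf{1}.
\end{flalign*}
\end{small}

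The final step is to take the limit $n \to \infty$. Since $B_\pi$ is a $\gamma$-contraction in the max-norm with unique fixed point $V^\pi$, we have $B_\pi^{n}[V] \to V^\pi$, and the geometric series $\sum_{i=0}^{n-1}\gamma^i$ converges to $1/(1-\gamma)$. Passing the element-wise inequality to the limit gives $V \leq V^\pi + \frac{\delta}{1-\gamma}\mathbf{1}$, i.e. $V - V^\pi \leq \frac{\delta}{1-\gamma}$, as claimed. The only step requiring care is the limiting argument: I would confirm that convergence $B_\pi^n[V] \to V^\pi$ in max-norm justifies passing to the limit in an element-wise inequality (which it does, since max-norm convergence implies coordinatewise convergence and non-strict inequalities are preserved under limits). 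I do not expect a genuine obstacle here; the main thing to get right is the bookkeeping of the constant shifts under $\gamma$, ensuring the geometric factor accumulates correctly rather than, say, collapsing to a single $\delta$.
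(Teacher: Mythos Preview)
Your argument is correct. You iterate $B_\pi$, using monotonicity and the constant-shift identity $B_\pi[V+c\mathbf{1}]=B_\pi[V]+\gamma c\mathbf{1}$, accumulate a geometric series, and pass to the limit via the contraction property; every step is sound.

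This is a genuinely different route from the paper. The paper sets $D=V-V^\pi$, rewrites it algebraically as $D=(V-B_\pi[V])+\gamma P^\pi D$, and then observes that this is itself the Bellman fixed-point equation for a policy with transition matrix $P^\pi$ and ``reward'' $V-B_\pi[V]$; the conclusion follows at once from the standard fact that a value function with rewards bounded by $\delta$ is bounded by $\delta/(1-\gamma)$. So the paper's proof is a one-line algebraic identification with an appeal to a known bound, whereas yours is a constructive iteration-and-limit argument. Your version is more self-contained (it does not invoke an external fact about bounded-reward value functions) and makes the role of the contraction explicit; the paper's version is shorter and arguably more structural, since it exposes $V-V^\pi$ as the value of an auxiliary MDP. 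Both are standard techniques and yield the same bound; neither dominates the other.
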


Next, Lemma \ref{lemma:monotonicity} generalizes the conditions that guarantee policy improvement in the offline multi-step lookahead policy improvement procedure to OSPI with choice functions. Proposition \ref{prop:chaining} follows from lemma \ref{lemma:monotonicity}. 

\begin{lemma}
\label{lemma:monotonicity}
If a stationary choice function $\psi$ is $\pi$-consistent and monotonic and $u = V^\pi$ then for any path $p;s$ such that $1 \leq |p;s| \leq H(\psi)$, $V^\psi_u(\lrcorner p;s) \geq V^\psi_u(p;s)$.
\end{lemma}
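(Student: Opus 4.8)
The plan is to establish a more general domination statement and then obtain the lemma as a special case via monotonicity. Say that two state paths $x$ and $y$ ending in the same state are \emph{comparable}, written $x \preceq y$, if $\psi(x;\sigma) \subseteq \psi(y;\sigma)$ for every extension $\sigma$ (with $\sigma$ empty as a special case). I would prove that $x \preceq y$ implies $V^\psi_u(x) \leq V^\psi_u(y)$. The lemma then follows at once: for any $p;s$ with $1 \leq |p;s| \leq H(\psi)$ I take $x = p;s$ and $y = \lrcorner p;s$, and since $\lrcorner(p;s;\sigma) = (\lrcorner p;s);\sigma$, monotonicity of $\psi$ gives $\psi(p;s;\sigma) \subseteq \psi(\lrcorner(p;s;\sigma)) = \psi((\lrcorner p;s);\sigma)$ for every $\sigma$, i.e.\ $p;s \preceq \lrcorner p;s$, whence $V^\psi_u(p;s) \leq V^\psi_u(\lrcorner p;s)$.

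Before the main induction I would record a pointwise safety fact, namely that the tree value never falls below the policy value: for every node $z$ of $T^\psi$ ending in a state $s_z$, $V^\psi_u(z) \geq V^\pi(s_z)$. This is proved by induction on the (finite, since $\psi$ is finite-horizon) depth of the subtree rooted at $z$. At a leaf, $V^\psi_u(z) = u(s_z) = V^\pi(s_z)$ using $u = V^\pi$. Otherwise $\pi$-consistency guarantees $\pi(s_z) \in \psi(z)$, so $V^\psi_u(z) \geq Q^\psi_u(z;\pi(s_z)) \geq B_\pi[V^\pi](s_z) = V^\pi(s_z)$, where the middle inequality applies the inductive hypothesis to the children states and the final equality is the fixed-point identity $B_\pi[V^\pi] = V^\pi$.

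With this in hand, I would prove the domination statement by induction on the subtree depth of $x$. If $x$ is a leaf then $V^\psi_u(x) = V^\pi(s) \leq V^\psi_u(y)$ directly by the safety fact, which is exactly the step that absorbs any extra depth of $y$ below $x$. Otherwise let $a^{*}$ attain $V^\psi_u(x) = \max_{a \in \psi(x)} Q^\psi_u(x;a)$; since $\psi(x) \subseteq \psi(y)$ we have $a^{*} \in \psi(y)$ and therefore $V^\psi_u(y) \geq Q^\psi_u(y;a^{*})$. Because $R$ and the transition probabilities are identical at $x$ and $y$, comparing $Q^\psi_u(x;a^{*})$ with $Q^\psi_u(y;a^{*})$ reduces to comparing $V^\psi_u(x;a^{*};s')$ with $V^\psi_u(y;a^{*};s')$ for each child $s'$. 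Each such pair is comparable (the inclusions defining $x;a^{*};s' \preceq y;a^{*};s'$ form a sub-family of those defining $x \preceq y$) and the first element has strictly smaller subtree depth, so the inductive hypothesis yields $V^\psi_u(x;a^{*};s') \leq V^\psi_u(y;a^{*};s')$; summing gives $Q^\psi_u(x;a^{*}) \leq Q^\psi_u(y;a^{*}) \leq V^\psi_u(y)$, completing the induction.

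The hard part will be the depth mismatch between the two trees: applying $\lrcorner$ shifts the root one step earlier, and monotonicity makes the resulting tree simultaneously wider at every node and potentially one level deeper than the tree at $p;s$, so a naive level-by-level coupling of the two trees does not line up. The pointwise safety fact is what resolves this, letting the leaf case charge the extra depth of $y$ against the leaf value $V^\pi(s)$ of $x$; the correctness of that charge rests entirely on $u = V^\pi$ being a fixed point of $B_\pi$, which is also precisely where $\pi$-consistency is indispensable.
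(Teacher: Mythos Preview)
Your argument is correct and takes a genuinely different route from the paper. The paper proves the lemma by a single downward induction on $|p;s|$ starting at $H(\psi)$: in the base case $p;s$ is a leaf, and they handle the one extra layer of $\lrcorner p;s$ by observing that its children are themselves leaves (so their values are $V^\pi$) and then invoking $\pi$-consistency to bound the max from below by $B_\pi[V^\pi](s)=V^\pi(s)$; the inductive step then uses monotonicity exactly as you do. Your decomposition instead factors the argument into two independent pieces: a pointwise safety fact $V^\psi_u(z)\geq V^\pi(s_z)$ (which is essentially the paper's Proposition~\ref{prop:chaining}, but established directly and at every node rather than derived \emph{from} the lemma), and an abstract domination principle for $\preceq$-comparable paths whose base case is discharged by that safety fact. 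The payoff is modularity and a cleaner treatment of the depth mismatch: where the paper must argue ad hoc about what happens one level below $\lrcorner p;s$ at the horizon, your leaf case absorbs any excess depth of $y$ in one stroke via the safety inequality. Your decomposition also makes explicit which hypothesis does which work---$\pi$-consistency together with $u=V^\pi$ drives the safety fact, while monotonicity is used only to establish $p;s \preceq \lrcorner p;s$---whereas the paper's single induction interleaves these roles. The paper's approach is shorter on the page and avoids introducing the auxiliary relation $\preceq$, but your version arguably exposes the structure more clearly and yields the node-level safety fact as a reusable by-product.
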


\begin{proposition}
\label{prop:chaining}
If a stationary choice function $\psi$ is $\pi$-consistent and monotonic and $u = V^\pi$, then $V^\psi_{u}(s) \geq V^\pi(s)$.\\
\end{proposition}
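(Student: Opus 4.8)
The plan is to prove a stronger, node-level statement by (strong) induction on the height of the subtree hanging below a node in $T^\psi(s_0)$, namely that $V^\psi_u(p;s) \geq V^\pi(s)$ for \emph{every} state node $p;s$ of the tree. The Proposition is then the special case of the root, where $p$ is empty and $|p|=0$. Working at the node level rather than trying to set up a recursion on the root-value function $s \mapsto V^\psi_u(s)$ is the crucial modeling choice: the subtree below an internal node uses the shifted choice sets $\psi(p;\,\cdot\,)$ and a shorter remaining horizon, so it is not literally a copy of the full-horizon tree rooted at $s$. A node-indexed induction sidesteps this mismatch, whereas Lemma~\ref{lemma:monotonicity} relates a node to its shifted version $\lrcorner p;s$ in the direction $V^\psi_u(\lrcorner p;s) \geq V^\psi_u(p;s)$, which bounds subtree values from \emph{above} and so does not by itself drive the lower bound we need here.

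For the base case I take $p;s$ to be a leaf path, so $\psi(p;s)=\emptyset$ and by definition $V^\psi_u(p;s)=u(s)=V^\pi(s)$, giving equality. For the inductive step I take an internal node $p;s$ and assume the claim for all of its grandchildren state nodes $p;s;a;s'$. Using $\pi$-consistency, $\pi(s)\in\psi(p;s)$, so the maximum defining $V^\psi_u(p;s)$ is at least the single term $Q^\psi_u(p;s;\pi(s))$, i.e.\ at least $R(s,\pi(s))+\gamma\sum_{s'}P_{ss'}(\pi(s))\,V^\psi_u(p;s;\pi(s);s')$. I then apply the induction hypothesis $V^\psi_u(p;s;\pi(s);s')\geq V^\pi(s')$ to each grandchild; since the weights $\gamma P_{ss'}(\pi(s))$ are nonnegative the inequality is preserved under the weighted sum, yielding $V^\psi_u(p;s)\geq R(s,\pi(s))+\gamma\sum_{s'}P_{ss'}(\pi(s))\,V^\pi(s')$.

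The closing step is to recognize the right-hand side as exactly $B_\pi[V^\pi](s)$, and since $V^\pi$ is the fixed point of the policy-restricted backup, $B_\pi[V^\pi]=V^\pi$, so it equals $V^\pi(s)$ and the induction closes. Equivalently, the whole argument can be phrased as chaining (telescoping) the one-step inequality down the stochastic $\pi$-path through the tree until it reaches the leaves, where $u=V^\pi$; the telescoped lower bound is precisely the finite-step unrolling of $V^\pi(s_0)$ via its Bellman equation, hence equal to $V^\pi(s_0)$.

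The step I expect to require the most care is the tree boundary together with the reading of $\pi$-consistency. Because a leaf has $\psi(p;s)=\emptyset$ it cannot literally contain $\pi(s)$, so consistency must be understood to apply at every internal node, which is exactly what keeps the action $\pi(s)$ available at each node where the maximum is taken; the finite-horizon assumption guarantees the $\pi$-path reaches a leaf within $H(\psi)$ steps, so the induction terminates and the differing depths of distinct leaves cause no difficulty. I would also flag, as an aside, that this $u=V^\pi$ case rests only on $\pi$-consistency and the Bellman fixed-point identity; monotonicity of $\psi$ is the ingredient that becomes essential for controlling the approximation error when $u\neq V^\pi$ in Theorem~\ref{thm:stationary}, and the per-step improvement exploited above is the same mechanism that underlies Lemma~\ref{lemma:monotonicity}.
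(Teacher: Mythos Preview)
Your argument is correct. The paper, however, proceeds differently: it derives Proposition~\ref{prop:chaining} \emph{from} Lemma~\ref{lemma:monotonicity} by picking, for the target state $s$, a path $p;s$ with $\psi(p;s)=\emptyset$ (so $V^\psi_u(p;s)=V^\pi(s)$) and then telescoping $V^\psi_u(p;s)\leq V^\psi_u(\lrcorner p;s)\leq\cdots\leq V^\psi_u(s)$ using Lemma~\ref{lemma:monotonicity} at each step. So your remark that Lemma~\ref{lemma:monotonicity} ``does not by itself drive the lower bound'' slightly undersells it: combined with the leaf anchor $u=V^\pi$ it is exactly the paper's proof. What your route buys is a genuinely more elementary argument that uses only $\pi$-consistency and the fixed-point identity $B_\pi[V^\pi]=V^\pi$; as you note, monotonicity of $\psi$ is not needed for this proposition, whereas the paper's route inherits that hypothesis from Lemma~\ref{lemma:monotonicity}. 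The paper's route, in turn, keeps Lemma~\ref{lemma:monotonicity} front and center, which is the tool that is genuinely required later (in Lemma~\ref{lemma:qadvantage}, to compare $V^\psi_{\bar u,1}$ against $V^\psi_{\bar u,0}$). Your handling of the leaf boundary---reading $\pi$-consistency as applying only at internal nodes---matches how the paper must be read as well, since $\psi(p;s)=\emptyset$ at leaves.
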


Lemma \ref{lemma:contraction} below bounds the values of paths of length less than or equal to $h(\psi)$ for two different leaf evaluation functions $u$ and $u'$ satisfying $\lVert u - u' \rVert_\infty \leq \epsilon$. This will be useful for quantifying the impact of the leaf evaluation function being an approximation to the base policy value function. 

\begin{lemma}
\label{lemma:contraction}
If $\psi$ is a stationary choice function and $\lVert u - u' \rVert_\infty \leq \epsilon$ then for any path $p;s$ with $|p;s| \leq h(\psi)$, $\left | V^{\psi}_u(p;s) - V^{\psi}_{u'}(p;s) \right | \leq \epsilon \gamma^{h(\psi) - |p;s|}$.
\end{lemma}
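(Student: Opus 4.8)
The plan is to prove a slightly stronger, node-local statement by induction and then specialize it. For any state node $n = p;s$ in $T^{\psi}(s_0)$, let $\mu(n)$ denote the minimum number of levels separating $n$ from a leaf in the subtree rooted at $n$ (equivalently, the minimum leaf depth within that subtree minus $|p;s|$). I would show that $\left| V^{\psi}_u(n) - V^{\psi}_{u'}(n) \right| \leq \epsilon \gamma^{\mu(n)}$ for every node $n$, and then recover the lemma from the observation that any leaf in the subtree of a node with $|p;s| \leq h(\psi)$ is itself a leaf of the full tree, hence has depth at least $h(\psi)$; therefore $\mu(p;s) \geq h(\psi) - |p;s|$, and since $\gamma \in [0,1]$ this gives $\gamma^{\mu(p;s)} \leq \gamma^{h(\psi)-|p;s|}$, which is exactly the claimed bound.

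The first ingredient is a one-step contraction inequality for internal nodes. For a non-leaf node $p;s$ the reward terms $R(s,a)$ are identical under $u$ and $u'$, so the only discrepancy enters through child values. Using the elementary fact $\left| \max_a f(a) - \max_a g(a) \right| \leq \max_a \left| f(a) - g(a) \right|$ at the state node, the triangle inequality at each action node, and the fact that $\sum_{s'} P_{ss'}(a) = 1$ makes each continuation a convex combination of child values, I would derive
\[
\left| V^{\psi}_u(p;s) - V^{\psi}_{u'}(p;s) \right| \;\leq\; \gamma \max_{m} \left| V^{\psi}_u(m) - V^{\psi}_{u'}(m) \right|,
\]
where $m$ ranges over the grandchild state nodes $p;s;a;s'$.

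With this in hand, the main induction is on the height of the subtree rooted at $n$. In the base case $n$ is a leaf, so $\mu(n)=0$ and $\left| V^{\psi}_u(n) - V^{\psi}_{u'}(n) \right| = \left| u(s) - u'(s) \right| \leq \epsilon = \epsilon\gamma^{0}$. For the inductive step, each grandchild $m$ satisfies $\mu(n) = 1 + \min_m \mu(m)$, so by the induction hypothesis $\max_m \left| V^{\psi}_u(m) - V^{\psi}_{u'}(m) \right| \leq \epsilon \max_m \gamma^{\mu(m)} = \epsilon \gamma^{\min_m \mu(m)}$, using that $\gamma^{x}$ is non-increasing in $x$. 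Combining this with the contraction inequality yields $\left| V^{\psi}_u(n) - V^{\psi}_{u'}(n) \right| \leq \gamma \cdot \epsilon \gamma^{\min_m \mu(m)} = \epsilon \gamma^{\mu(n)}$, closing the induction.

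I expect the only real subtlety to be the mismatch between the min-horizon $h(\psi)$ and the fact that the subtrees hanging off nodes near depth $h(\psi)$ may extend well below depth $h(\psi)$. A direct downward induction on $h(\psi)-|p;s|$ stalls at an internal node sitting exactly at depth $h(\psi)$, since such a node is not a leaf and its value still depends on strictly deeper leaves. Carrying the per-node quantity $\mu(n)$ through the induction, rather than the global $h(\psi)$, sidesteps this cleanly; the final step that replaces $\mu(p;s)$ by its lower bound $h(\psi)-|p;s|$ only discards a non-negative power of $\gamma$, which is harmless because $\gamma \leq 1$.
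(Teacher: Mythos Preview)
Your proof is correct. Both your argument and the paper's rest on the same one-step contraction inequality at internal nodes (via $|\max_a f(a) - \max_a g(a)| \leq \max_a |f(a)-g(a)|$ and the fact that transitions give convex combinations) together with the trivial bound $\epsilon$ at leaves. The difference is organizational. The paper deals with the subtlety you flag at depth $h(\psi)$ by first proving a separate auxiliary lemma showing $|V^\psi_u(p;s) - V^\psi_{u'}(p;s)| \leq \epsilon$ for all paths with $h(\psi) \leq |p;s| \leq H(\psi)$ (induction on $|p;s|$ down from $H(\psi)$), and then uses that as the base case of a second induction on $|p;s|$ down from $h(\psi)$ to $0$. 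Your single induction on subtree height, carrying the per-node quantity $\mu(n)$, collapses these two phases into one and yields the slightly sharper node-local bound $\epsilon\gamma^{\mu(n)}$ before the final specialization; the paper's two-phase split makes the role of $h(\psi)$ explicit at the lemma level but at the cost of stating and proving the extra intermediate result.
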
 

For the following we use the notation $V_{u,k}^\psi$ to denote the vector consisting of the elements of $V^\psi_u$ for $|p;s| = k$. In particular, $V_{u, 0}^\psi$ is the vector of the values of all root nodes, i.e., $|p;s| = 0$. Lemma \ref{lemma:qadvantage} and proposition \ref{prop:policyvalue} below are key results that combine to bound the difference between $V_{u, 0}^\psi$ and the value of $\Pi^\psi_u$.

\begin{lemma}
\label{lemma:qadvantage}
If a stationary choice function $\psi$ is $\pi$-consistent and monotonic  and $\lVert u - V^\pi \rVert_\infty \leq \epsilon_\pi$, then for $\pi' = \Pi^\psi_u$,
$V_{u, 0}^\psi - B_{\pi'}[V_{u, 0}^\psi] \leq \epsilon_\pi \gamma^{h(\psi)} (1 + \gamma)$.
\end{lemma}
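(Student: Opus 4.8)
The plan is to expand both sides of the claimed inequality at a fixed state $s$ and reduce everything to a bound on the gap between a depth-one subtree value and a root value. Since $\pi'(s) = \arg\max_{a \in \psi(s)} Q^{\psi}_u(s;a)$, the root value is realized by $\pi'$, so
\[
V^{\psi}_u(s) = Q^{\psi}_u(s;\pi'(s)) = R(s,\pi'(s)) + \gamma \sum_{s'} P_{ss'}(\pi'(s))\, V^{\psi}_u(s;\pi'(s);s').
\]
The backup term $B_{\pi'}[V^{\psi}_{u,0}](s) = R(s,\pi'(s)) + \gamma \sum_{s'} P_{ss'}(\pi'(s))\, V^{\psi}_u(s')$ has the identical reward and the same transition weights, but uses the root value $V^{\psi}_u(s')$ in place of the subtree value $V^{\psi}_u(s;\pi'(s);s')$. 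Subtracting, the rewards cancel and I am left with
\[
V^{\psi}_{u,0}(s) - B_{\pi'}[V^{\psi}_{u,0}](s) = \gamma \sum_{s'} P_{ss'}(\pi'(s))\left[V^{\psi}_u(s;\pi'(s);s') - V^{\psi}_u(s')\right],
\]
so it suffices to bound each bracketed term by $\epsilon_\pi \gamma^{h(\psi)-1}(1+\gamma)$ and use $\sum_{s'} P_{ss'}(\pi'(s)) = 1$.

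To bound $V^{\psi}_u(s;\pi'(s);s') - V^{\psi}_u(s')$, I would insert the values computed with the exact leaf evaluation $V^\pi$ and split into three terms:
\[
\left[V^{\psi}_u(s;\pi'(s);s') - V^{\psi}_{V^\pi}(s;\pi'(s);s')\right] + \left[V^{\psi}_{V^\pi}(s;\pi'(s);s') - V^{\psi}_{V^\pi}(s')\right] + \left[V^{\psi}_{V^\pi}(s') - V^{\psi}_u(s')\right].
\]
The middle term is the crux, and it is where monotonicity enters. Because $\pi'(s) \in \psi(s)$, the path $s;\pi'(s);s'$ is $\psi$-satisfying, has length $1$, and satisfies $\lrcorner(s;\pi'(s);s') = s'$; Lemma \ref{lemma:monotonicity} (whose hypotheses of $\pi$-consistency, monotonicity, and $u=V^\pi$ all hold here) then gives $V^{\psi}_{V^\pi}(s') \geq V^{\psi}_{V^\pi}(s;\pi'(s);s')$, so the middle term is $\leq 0$ and drops out.

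For the two outer terms I would apply Lemma \ref{lemma:contraction} with $u' = V^\pi$ and $\epsilon = \epsilon_\pi$. The path $s;\pi'(s);s'$ has length $1 \le h(\psi)$ (note $h(\psi) \ge 1$, since $\pi$-consistency forces $\pi(s) \in \psi(s)$ at the root, so no root is a leaf), yielding $\lvert V^{\psi}_u(s;\pi'(s);s') - V^{\psi}_{V^\pi}(s;\pi'(s);s')\rvert \le \epsilon_\pi \gamma^{h(\psi)-1}$; the root path $s'$ has length $0 \le h(\psi)$, yielding $\lvert V^{\psi}_{V^\pi}(s') - V^{\psi}_u(s')\rvert \le \epsilon_\pi \gamma^{h(\psi)}$. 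Adding these to the nonpositive middle term gives $V^{\psi}_u(s;\pi'(s);s') - V^{\psi}_u(s') \le \epsilon_\pi \gamma^{h(\psi)-1}(1+\gamma)$. Substituting back and using normalization of the transition probabilities produces the factor $\gamma \cdot \epsilon_\pi \gamma^{h(\psi)-1}(1+\gamma) = \epsilon_\pi \gamma^{h(\psi)}(1+\gamma)$, as required.

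I expect the main obstacle to be the middle term: one must recognize that comparing a depth-one subtree value against a root value is precisely an instance of Lemma \ref{lemma:monotonicity}, and that this comparison is only legitimate because $\pi'(s) \in \psi(s)$ makes $s;\pi'(s);s'$ a genuine $\psi$-satisfying path of the root-$s$ tree. The rest is bookkeeping: pairing the two contraction bounds at the correct depths $0$ and $1$, and confirming $h(\psi) \ge 1$ so that the depth-one application of Lemma \ref{lemma:contraction} is valid.
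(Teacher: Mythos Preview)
Your proof is correct and follows essentially the same route as the paper: expand $V^{\psi}_{u,0}(s)-B_{\pi'}[V^{\psi}_{u,0}](s)$ via $\pi'(s)=\arg\max_{a\in\psi(s)}Q^{\psi}_u(s;a)$ to obtain $\gamma\sum_{s'}P_{ss'}(\pi'(s))[V^{\psi}_u(s;\pi'(s);s')-V^{\psi}_u(s')]$, then insert $V^{\psi}_{V^\pi}$ and control the two outer terms by Lemma~\ref{lemma:contraction} at depths $1$ and $0$ while the middle term is nonpositive by Lemma~\ref{lemma:monotonicity}. Your explicit justification that $h(\psi)\geq 1$ (so that Lemma~\ref{lemma:contraction} is applicable to the depth-one path) is a detail the paper leaves implicit.
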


\begin{proposition}
\label{prop:policyvalue}
If a stationary choice function $\psi$ is $\pi$-consistent and monotonic  and $\lVert u - V^\pi \rVert_\infty \leq \epsilon_\pi$, then for $\pi' = \Pi^\psi_u$,
\begin{flalign*}
V_{u, 0}^\psi - V^{\pi'} \leq \dfrac{\epsilon_\pi \gamma^{h(\psi)} (1 + \gamma)}{1 - \gamma}. \hspace{6em}
\end{flalign*}
\begin{proof}
Directly combine lemmas \ref{lemma:qadvantage} and \ref{lemma:ldiff}.
\end{proof}
\end{proposition}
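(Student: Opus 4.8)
The plan is to read Proposition \ref{prop:policyvalue} as a direct composition of the two preceding lemmas, with Lemma \ref{lemma:ldiff} supplying the mechanism that converts a one-step Bellman decrease into a bound on the full value gap. Concretely, I would take the generic value vector in Lemma \ref{lemma:ldiff} to be $V = V_{u,0}^\psi$, the vector of root values of the trees, and take its ``arbitrary policy'' to be the online-search policy itself, $\pi' = \Pi^\psi_u$. The hypothesis required to fire Lemma \ref{lemma:ldiff} is a uniform bound on how much $B_{\pi'}$ can decrease $V_{u,0}^\psi$, i.e. $V_{u,0}^\psi - B_{\pi'}[V_{u,0}^\psi] \leq \delta$ for some $\delta$, and this is exactly what Lemma \ref{lemma:qadvantage} delivers, with $\delta = \epsilon_\pi \gamma^{h(\psi)}(1+\gamma)$.

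The steps, in order, are: (i) invoke Lemma \ref{lemma:qadvantage} under the stated hypotheses ($\psi$ is $\pi$-consistent and monotonic and $\lVert u - V^\pi \rVert_\infty \leq \epsilon_\pi$) to obtain the elementwise inequality $V_{u,0}^\psi - B_{\pi'}[V_{u,0}^\psi] \leq \epsilon_\pi \gamma^{h(\psi)}(1+\gamma)$; (ii) instantiate Lemma \ref{lemma:ldiff} with the policy $\pi'$ and the value vector $V_{u,0}^\psi$, so that the ``$\delta$'' of that lemma equals $\epsilon_\pi \gamma^{h(\psi)}(1+\gamma)$; and (iii) read off its conclusion $V_{u,0}^\psi - V^{\pi'} \leq \delta/(1-\gamma)$, which is precisely the claimed bound after substituting the value of $\delta$.

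The one point that deserves care — and essentially the only place the argument could slip — is the role assignment in Lemma \ref{lemma:ldiff}: it is stated for an arbitrary policy and value vector, so I must apply it with the search policy $\pi'$ rather than the base policy $\pi$, and with the tree-value vector $V_{u,0}^\psi$ rather than $V^\pi$ or $u$. Since $\pi' = \Pi^\psi_u$ is a fixed stationary policy, $B_{\pi'}$ is well defined and the geometric-series argument underlying Lemma \ref{lemma:ldiff} applies verbatim, with no extra assumption on $\pi'$ required. The genuine difficulty of the overall result lives entirely in Lemma \ref{lemma:qadvantage}, which must tie $V_{u,0}^\psi$ simultaneously to its one-step backup under $\pi'$ and to the leaf-approximation error $\epsilon_\pi$; once that lemma is granted, Proposition \ref{prop:policyvalue} follows immediately and needs no further estimates.
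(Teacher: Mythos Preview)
Your proposal is correct and matches the paper's own proof exactly: the paper's proof consists of the single line ``Directly combine lemmas \ref{lemma:qadvantage} and \ref{lemma:ldiff},'' and you have spelled out precisely that combination, with the correct instantiation of Lemma \ref{lemma:ldiff} at the policy $\pi'$ and value vector $V_{u,0}^\psi$.
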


\noindent Using the above lemmas we can now prove the main result.
\setcounter{theorem}{0}
\begin{theorem}
\label{thm:stationary2}
If a stationary choice function $\psi$ is $\pi$-consistent and monotonic and $\lVert u - V^\pi \rVert_\infty = \epsilon_\pi$, then for $\pi' = \Pi^\psi_u$,
\begin{small}
\begin{flalign*}
V^\pi - V^{\pi'} \leq \dfrac{2 \epsilon_\pi \gamma^{h(\psi)}}{1 - \gamma}. \hspace{10em}
\end{flalign*}
\end{small}
\end{theorem}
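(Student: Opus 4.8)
The plan is to bound $V^\pi - V^{\pi'}$ by routing it through the root value vector $V^\psi_{u,0}$ of the search tree, splitting
\[
V^\pi - V^{\pi'} = \left(V^\pi - V^\psi_{u,0}\right) + \left(V^\psi_{u,0} - V^{\pi'}\right),
\]
and then bounding each of the two pieces separately using the machinery already assembled above. The arithmetic at the end will combine the two bounds into the stated factor $2/(1-\gamma)$.

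The second piece is handled directly: under exactly the hypotheses of the theorem, Proposition \ref{prop:policyvalue} gives $V^\psi_{u,0} - V^{\pi'} \leq \frac{\epsilon_\pi \gamma^{h(\psi)}(1+\gamma)}{1-\gamma}$, so no further work is needed there.

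For the first piece I would compare the approximate-leaf tree against the exact-leaf tree. First, applying Proposition \ref{prop:chaining} with the \emph{exact} leaf evaluation $u = V^\pi$ yields $V^\psi_{V^\pi,0} \geq V^\pi$, i.e.\ $V^\pi - V^\psi_{V^\pi,0} \leq 0$. Next I would invoke Lemma \ref{lemma:contraction} with $u' = V^\pi$, observing that every root node has path length $|p;s| = 0 \leq h(\psi)$, so the lemma bounds the componentwise gap between the two trees at the root by $\epsilon_\pi \gamma^{h(\psi)-0} = \epsilon_\pi \gamma^{h(\psi)}$. Inserting the intermediate vector $V^\psi_{V^\pi,0}$ then gives
\[
V^\pi - V^\psi_{u,0} = \left(V^\pi - V^\psi_{V^\pi,0}\right) + \left(V^\psi_{V^\pi,0} - V^\psi_{u,0}\right) \leq 0 + \epsilon_\pi \gamma^{h(\psi)}.
\]
Finally I would add the two bounds and simplify, using $(1-\gamma)+(1+\gamma) = 2$:
\[
V^\pi - V^{\pi'} \leq \epsilon_\pi \gamma^{h(\psi)} + \frac{\epsilon_\pi \gamma^{h(\psi)}(1+\gamma)}{1-\gamma} = \frac{2\epsilon_\pi \gamma^{h(\psi)}}{1-\gamma}.
\]

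The main subtlety, and the place to be careful, is the bookkeeping of which leaf evaluation function indexes each tree. The vector $V^\psi_{u,0}$ uses the \emph{approximate} $u$ and is what actually defines $\pi' = \Pi^\psi_u$, whereas Proposition \ref{prop:chaining} delivers its improvement guarantee only in the \emph{exact} world where $u = V^\pi$. The crux of the argument is therefore to compare across both trees: Proposition \ref{prop:chaining} supplies the improvement $V^\pi \leq V^\psi_{V^\pi,0}$ with exact leaves, while Lemma \ref{lemma:contraction} controls the approximation error $\epsilon_\pi \gamma^{h(\psi)}$ incurred by switching to approximate leaves at the root. No new estimates are required; the entire proof is an assembly of the existing lemmas in the correct order, followed by the routine simplification shown above.
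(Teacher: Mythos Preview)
Your proposal is correct and follows essentially the same approach as the paper: both split $V^\pi - V^{\pi'}$ through $V^\psi_{u,0}$, handle the second term directly via Proposition~\ref{prop:policyvalue}, and handle the first term by combining Proposition~\ref{prop:chaining} (with exact leaves $\bar u = V^\pi$) with Lemma~\ref{lemma:contraction} to absorb the $\epsilon_\pi\gamma^{h(\psi)}$ error, before performing the identical final simplification.
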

\begin{proof}
Let $\bar{u} = V^\pi$ to simplify notation.
\begin{small}
\begin{flalign*}
V^\pi(s) - V^{\pi'}(s) &= V^\pi(s) - V_{u, 0}^\psi(s) +  V_{u, 0}^\psi(s) -  V^{\pi'}(s), \hspace{3em} \\ 
& \hspace{-5em} \leq V^\pi(s) - (V_{\bar{u}, 0}^\psi(s) - \epsilon_\pi \gamma^{h(\psi)}) + V_{u, 0}^\psi(s) -  V^{\pi'}(s), \\
& \hspace{14.2em} \text{\emph{ by lemma \ref{lemma:contraction}}} \\
& \hspace{-5em} \leq \epsilon_\pi \gamma^{h(\psi)} + V_{u, 0}^\psi(s) - V^{\pi'}(s),  \hspace{0.7em} \text{\emph{ since }} V_{\bar{u}, 0}^\psi(s) \geq V^\pi(s)  \\
& \hspace{12.2em} \text{\emph{ by proposition \ref{prop:chaining}}} \\
& \hspace{-5em} \leq \epsilon_\pi \gamma^{h(\psi)} + \dfrac{\epsilon_\pi \gamma^{h(\psi)} (1 + \gamma)}{1 - \gamma}, \hspace{4em} \text{\emph{ by proposition \ref{prop:policyvalue}}} \\
& \hspace{-5em} = \dfrac{2 \epsilon_\pi \gamma^{h(\psi)}}{1 - \gamma}.
\end{flalign*}
\end{small}
\end{proof}

\subsection{Non-Stationary Choice Functions}
\label{sec:analysisextensions}

We have assumed that choice functions are stationary, i.e., the same choice function is used across online decision steps. Some OSPI approaches, however, correspond to a non-stationary choice function which vary across steps. For example, some search algorithms use a sub-tree produced at time step $t$ as a starting point for search at time step $t+1$ or randomized OSPI approaches are non-stationary due to different random seeds across steps. Here, we extend our analysis to the non-stationary case. 

A non-stationary choice function $\Psi = (\psi_1, \psi_2, \psi_3, \hdots)$ is a sequence of time-step indexed stationary choice functions $\psi_t$. To relate two different stationary choice functions $\psi$ and $\psi'$ we say that $\psi$ subsumes $\psi'$, denoted $\psi \supseteq \psi'$, if for every path $p;s$, $\psi(p;s) \supseteq \psi'(p;s)$. We can extend the bound in Theorem \ref{thm:stationary2} to a non-stationary choice function $\Psi$ when each $\psi_t$ satisfies the conditions of that theorem, each $\psi_t$ has the same set of leaf paths, and $\psi_{t+1} \supseteq \psi_{t}$ for each time-step $t$. 

\begin{theorem}
\label{thm:t2}
Let $\Psi = (\psi_1, \psi_2, \hdots)$ be a non-stationary choice function such that each component choice function $\psi_t$ is monotonic and $\pi$-consistent and $\lVert u - V^\pi \rVert_\infty = \epsilon_\pi$. If all $\psi_t$ have the same set of leaf paths and for each time-step $t$, $\psi_{t+1} \supseteq \psi_t$, then for $\pi' = \Pi^\psi_u$ and all $s \in S$,
\begin{small}
\begin{flalign*}
V^\pi(s) - V^{\pi'}(s) \leq \dfrac{2 \epsilon_\pi \gamma^{h(\psi_1)}}{1 - \gamma}. \hspace{6em}
\end{flalign*}
\end{small}
\end{theorem}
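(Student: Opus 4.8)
The plan is to reduce the non-stationary statement to the machinery already built for the stationary case, using the two extra hypotheses in complementary roles: the common leaf-path set to make the horizon-dependent slack uniform across time, and the subsumption chain $\psi_{t+1}\supseteq\psi_t$ to make the sequence of lookahead value vectors monotone in time. Write $\pi'_t=\Pi^{\psi_t}_u$ for the action rule actually applied at step $t$, and let $V_t:=V^{\psi_t}_{u,0}$ be the vector of root values computed from $\psi_t$. Since all $\psi_t$ share the same leaf paths they share the same max- and min-horizons, so $h(\psi_t)=h(\psi_1)$ for every $t$; this is what lets every per-step error term carry the single factor $\gamma^{h(\psi_1)}$.

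First I would establish the time-monotonicity $V_{t+1}\ge V_t$. Because $\psi_{t+1}\supseteq\psi_t$ and the two trees terminate on the same leaf paths, a backward induction from the shared leaves (where both values equal $u$) through the $Q$- and $V$-recursions shows $V^{\psi_{t+1}}_u(p;s)\ge V^{\psi_t}_u(p;s)$ at every node: the extra actions allowed by $\psi_{t+1}$ can only enlarge each $\max$, and the induction never reaches a node where one function has stopped while the other continues. Evaluating at the roots gives $V_{t+1}\ge V_t$.

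Next I would produce a per-step inequality $V_t\le B_{\pi'_t}[V_{t+1}]+\delta$ with $\delta=\epsilon_\pi\gamma^{h(\psi_1)}(1+\gamma)$. Applying Lemma \ref{lemma:qadvantage} to the stationary $\psi_t$ (monotonic, $\pi$-consistent, $\lVert u-V^\pi\rVert_\infty\le\epsilon_\pi$) yields $V_t-B_{\pi'_t}[V_t]\le\delta$; then monotonicity of the operator $B_{\pi'_t}$ together with $V_{t+1}\ge V_t$ upgrades this to $V_t\le B_{\pi'_t}[V_t]+\delta\le B_{\pi'_t}[V_{t+1}]+\delta$. The core new step is a non-stationary analogue of Lemma \ref{lemma:ldiff}: writing $V^{\pi'}_t$ for the value of the non-stationary online policy from step $t$ (so $V^{\pi'}_t=B_{\pi'_t}[V^{\pi'}_{t+1}]$) and $D_t:=V_t-V^{\pi'}_t$, subtracting gives $D_t\le\gamma P_{\pi'_t}D_{t+1}+\delta\mathbf 1$. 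Iterating and using that each $P_{\pi'_t}$ is stochastic (so $P_{\pi'_t}\mathbf 1=\mathbf 1$) telescopes to $D_1\le\delta\sum_{k\ge0}\gamma^k\mathbf 1=\frac{\delta}{1-\gamma}\mathbf 1$, the tail $\gamma^k\big(\prod_{i<k}P_{\pi'_i}\big)D_{k+1}$ vanishing because the value vectors are uniformly bounded. This reproduces the bound of Proposition \ref{prop:policyvalue}, now for $V_1$ against the non-stationary policy.

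Finally I would close exactly as in Theorem \ref{thm:stationary2}: decompose $V^\pi-V^{\pi'}=(V^\pi-V_1)+(V_1-V^{\pi'}_1)$, bound the first gap by $\epsilon_\pi\gamma^{h(\psi_1)}$ via Proposition \ref{prop:chaining} ($V^{\psi_1}_{\bar u,0}\ge V^\pi$ for $\bar u=V^\pi$) and Lemma \ref{lemma:contraction} at depth $0$ ($\lvert V_1-V^{\psi_1}_{\bar u,0}\rvert\le\epsilon_\pi\gamma^{h(\psi_1)}$), and add the $\frac{\epsilon_\pi\gamma^{h(\psi_1)}(1+\gamma)}{1-\gamma}$ just obtained, which sums to $\frac{2\epsilon_\pi\gamma^{h(\psi_1)}}{1-\gamma}$. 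I expect the main obstacle to be the non-stationary unrolling: unlike Lemma \ref{lemma:ldiff}, the backup operator and its fixed point change every step, so correctness hinges on (a) the monotonicity $V_{t+1}\ge V_t$ to replace $V_t$ by $V_{t+1}$ under $B_{\pi'_t}$, and (b) a uniform bound on $\lVert D_t\rVert_\infty$ to discard the infinite matrix product in the limit; getting the leaf-path hypothesis to do precisely the work of pinning $h(\psi_t)=h(\psi_1)$ and of grounding the monotonicity induction is the delicate part.
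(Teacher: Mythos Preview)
Your proposal is correct and follows essentially the same route as the paper: both invoke Lemma~\ref{lemma:qadvantage} per step, use the common leaf-path hypothesis to equalize $h(\psi_t)=h(\psi_1)$, use subsumption to get $V^{\psi_{t+1}}_{u,0}\ge V^{\psi_t}_{u,0}$ (the paper isolates this as a separate lemma), telescope across time, and close with Lemma~\ref{lemma:contraction} and Proposition~\ref{prop:chaining}. The only cosmetic difference is that the paper substitutes a lower bound on $R^{\pi'_t}$ into the explicit infinite-sum expansion $V^{\pi'}=\sum_t\big[\prod_{k<t}\gamma P^{\pi'_k}\big]R^{\pi'_t}$ and cancels, whereas you organize the same cancellation via the recursion $D_t\le\gamma P_{\pi'_t}D_{t+1}+\delta\mathbf 1$; these are algebraically equivalent.
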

The proof is in the appendix. 
This result has implications on the design of OSPI procedures that correspond to non-stationary choice functions. For example, many MCTS-based approaches, such as that used by AlphaZero, do not appear to correspond to non-stationary choice functions that satisfy these conditions. This does not mean that they will not perform well in a particular application, but suggests that for some applications they can have fundamental issues that degrade the performance of a base policy, even ignoring inaccuracies due to Monte-Carlo sampling. One way to adjust some of these and other algorithms to achieve the subsumption property is to build upon the relevant subtrees constructed at time $t$ at time step $t+1$. The practical impact of such a change is an empirical question worth investigating in future work.

\section{Limited Discrepancy Choice Functions}
\label{sec:LDCF}
We do not expect a single type of choice function to perform best across all problems. Rather, in practice, the selection of a choice function can be similar to the selection of other application-dependent hyperparameters. This motivates defining parametric families of choice functions that span different trade-offs between decision time and quality. In particular, given an application's decision-time constraints, offline optimization can be used to select a high-performing choice function that satisfies the constraints.

To support this vision, we introduce the parametric family of \emph{limited discrepancy choice functions (LDCFs)}. We show that all LDCFs are monotonic and $\pi$-consistent for any $\pi$ and hence satisfy our safety conditions. We then analyze how the parameters of the LDCF family relate to the computational complexity of online action selection. Finally we relate LDCFs to previously introduced examples.  \\

{\bf LDCF Definition and Safety.}
An LDCF $\psi$ defines a uniform-horizon tree, which limits the discrepancies w.r.t. a base policy along root-to-leaf paths by their number and depth. 
In cases where a base policy only makes occasional errors the discrepancy limit makes intuitive sense. Indeed, search can improve the policy by ``correcting" the rare errors along paths by introducing discrepancies. This suggests that there can be a computational advantage to bounding search by discrepancies in applications of OSPI to learned policies that already perform well but can still be improved.\\

A discrepancy w.r.t. $\pi$ in path $p$ is a consecutive state-action pair $(s,a)$ in $p$ such that $a \neq \pi(s)$.
LDCFs are parameterized by the tuple $(\pi, H, K, D, \Delta)$, where $\pi$ is the base policy, $H$ is the uniform horizon bound, $K \leq H$ is a bound on the number of discrepancies in a path, and $D < H$ is a bound on the maximum depth that a discrepancy may appear in a path. Finally, the \emph{discrepancy proposal function} $\Delta : S \times \{0,\ldots, D\} \rightarrow 2^A$ maps pairs of states and depths (excluding the leaf depth) to action subsets. Intuitively $\Delta$ returns the discrepancies that can be considered at a state node $p;s$ at depth $|p|$ which has not yet reached the discrepancy limit imposed by $K$ and $D$.
We allow $\Delta$ to depend on depth, since it is often useful to allow for more discrepancies at shallower depths. Given parameters $\theta=(\pi, H, K, D, \Delta)$ the corresponding LDCF is defined as follows.\\
\begin{small}
\begin{flalign*}
\psi_{\theta}(p;s) = \left \{
\begin{array}{lr}
\Delta(s,|p|)\cup \{\pi(s)\}, & \hspace{-2em} |p| \leq D \text{  and  } \#[p \not= \pi] < K \\
\pi(s), & \hspace{-2em} D < |p| < H \text{  or  } \#[p \not= \pi] = K  \\
\emptyset, & \hspace{-2em} |p|=H  \\
\end{array}
\right.
\end{flalign*}
\end{small}

All members of the LDCF family are $\pi$-consistent by construction. However, monotonicity of an LDCF requires constraining $\Delta$. We say that $\Delta$ is \emph{depth monotonic} if $\Delta(s,d)\supseteq \Delta(s,d+1)$ for all $s\in S$ and $d$.\\
\begin{theorem}
\label{thm:LDCFmonotonicity}
For LDCF parameters $\theta=(\pi, H, K, D, \Delta)$, if $\Delta$ is depth monotonic, then $\psi_{\theta}$ is monotonic.
\end{theorem}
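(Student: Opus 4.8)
The plan is to verify the defining inequality of monotonicity, $\psi_{\theta}(p;s) \subseteq \psi_{\theta}(\lrcorner p;s)$, directly from the piecewise definition of $\psi_{\theta}$ by a case analysis on which of its three branches $\psi_{\theta}(p;s)$ lands in. First I would record how the two quantities the definition keys on change when the leading state-action pair is stripped off. Writing $d = |p|$ and $k = \#[p \neq \pi]$, removing the first pair decreases the depth by exactly one, so $|\lrcorner p| = d-1$, and removes at most one discrepancy, so $\#[\lrcorner p \neq \pi] \in \{k-1, k\}$ and in particular $\#[\lrcorner p \neq \pi] \leq k$; the terminal state $s$ is unchanged. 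These two monotone facts (depth strictly drops by one, discrepancy count weakly drops) are the whole engine of the argument.

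Then I would split into the three branches. If $\psi_{\theta}(p;s) = \emptyset$ (the $|p|=H$ branch), the containment is immediate since $\emptyset$ is a subset of everything. If $\psi_{\theta}(p;s) = \{\pi(s)\}$ (the second branch), it suffices to show $\pi(s) \in \psi_{\theta}(\lrcorner p;s)$; since this branch forces $d < H$ we get $d-1 < H$, and every non-leaf branch of $\psi_{\theta}$ contains $\pi(s)$ by construction, so $\pi(s) \in \psi_{\theta}(\lrcorner p;s)$. The only substantive case is the first branch, where $\psi_{\theta}(p;s) = \Delta(s,d) \cup \{\pi(s)\}$ under the conditions $d \leq D$ and $k < K$. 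Here the depth and discrepancy facts give $d-1 \leq D$ and $\#[\lrcorner p \neq \pi] \leq k < K$, so $\lrcorner p;s$ also meets the first-branch conditions and $\psi_{\theta}(\lrcorner p;s) = \Delta(s,d-1) \cup \{\pi(s)\}$. The required containment then reduces to $\Delta(s,d) \subseteq \Delta(s,d-1)$, which is exactly the depth-monotonicity hypothesis $\Delta(s,d-1) \supseteq \Delta(s,d)$.

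I expect the main delicacy to be bookkeeping rather than any genuine obstacle: one must be careful that stripping the leading pair keeps $\lrcorner p;s$ inside the same (or a larger) branch, which is why it matters that the depth drops by exactly one (so $d \leq D$ gives $d-1 \leq D$ rather than possibly crossing a threshold) and that the discrepancy count never increases (so the limit $K$ cannot newly bind). A second point to pin down is the reading of the piecewise cases when $|p| = H$ coincides with $\#[p \neq \pi] = K$: treating $|p| = H$ as the leaf branch that returns $\emptyset$ makes the definition unambiguous and only helps, since $\emptyset$ is trivially contained. Finally, no case requires $p;s$ to be $\psi$-satisfying, so the argument covers all of $\mathcal{SP}$ as the definition of monotonicity demands.
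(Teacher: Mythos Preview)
Your proposal is correct and follows essentially the same approach as the paper's proof: both observe that stripping the leading pair decreases depth by one and never increases the discrepancy count, then do a case analysis on the three branches of the $\psi_{\theta}$ definition, with depth monotonicity of $\Delta$ supplying the only nontrivial containment. The paper organizes the case split slightly differently (by whether $\lrcorner p;s$ lands in the same branch as $p;s$ or a strictly ``higher'' one), but the content is identical.
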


\noindent The proof is in the appendix. A straightforward way to obtain a depth monotonic $\Delta$ is to use a learned action-ranking function over states and return the top ranked actions at a state, where the number of returned actions is non-increasing with tree depth.\\

{\bf Application to Special Cases.} The choice function $\psi_{lds}$ 
is a restricted LDCF with $\Delta(s,d)=A$, which trivially satisfies our safety conditions. The LDCF space provides more flexibility on how to better control the introduction of discrepancies compared to traditional LDS.

The policy rollout choice function $\psi_{ro}$ is a special case of an LDCF with $D=0$ and $\Delta(s,0)=A$, which allows all choices at the root and only the base policy's choices thereafter. Since $\Delta$ is trivially depth monotonic, our safety result applies. This can be generalized to multi-step look-ahead rollout \cite{Bertsekas:1996:NP:560669}, where the top $M$ levels of the tree are fully expanded followed by restricting actions to those of the base policy until the horizon. Specifically, $D=M$ and $\Delta(s,d)=A$, which again satisfies our safety conditions. Note that when $D=H-1$ this degenerates to value iteration with horizon $H$.

Finally, the pruned-search choice function $\psi_{q,k}$ is an LDCF with a specific choice of discrepancy function $\text{TOP}_{q,k}$. Our safety conditions specify sufficient constraints that the action ranking function $q$ should satisfy. When $q$ is history independent, $\text{TOP}_{q,k}$ is depth monotonic. Otherwise, if $q$ is history-dependent, e.g. a recurrent neural network, no such guarantee can be made. However, it is relatively straightforward to put a wrapper around such a $q$ to ensure depth monotonicity.

{\bf Computational Complexity.} Increasing $H$, $K$, and $D$, and the size of sets returned by $\Delta$ can be expected to improve $\Pi^{\psi_{\theta}}_u$ for reasonable $u$. This comes at the cost of higher computational complexity, which is typically dominated by the number of leaves in $T^{\psi_{\theta}}$. 
In addition to the LDCF parameters, the number of leaf nodes depends on the \emph{state branching factor} $C$, which is the maximum number of state nodes under an action node. When an exact model is used, this is the maximum number of non-zero probability successor states. For Monte-Carlo algorithms, this is the number of successor states sampled for each action node. Given the state branching factor and an upper bound on the number of actions returned by $\Delta$ we get the following bound on the tree size.
\begin{proposition}
Let $\psi_{\theta}$ be an LDCF with $\theta=(\pi, H, K, D, \Delta)$, such that $\Delta(s)\leq W$ for any $s\in S$. The number of leaf nodes in $T^{\psi_{\theta}}$ with state branching factor $C$ is upper bounded by $2C^H$ for $(D+1)W=1$ and otherwise by $\frac{\left((D+1)W\right)^{K+1}-1}{(D+1)W-1}C^H = O\left((DW)^KC^H\right)$.
\end{proposition}

\noindent Ignoring the impact of $C$, which is controlled by the search algorithm, the complexity is dominated by $K$. We also see that for deterministic domains where $C=1$, there is no exponential dependence on $H$.

\section{Illustrative Empirical Results} 
\label{sec:experiments}

\begin{figure*}[t]
\begin{center}
    \includegraphics[width=0.45\textwidth, height=0.21\textheight]{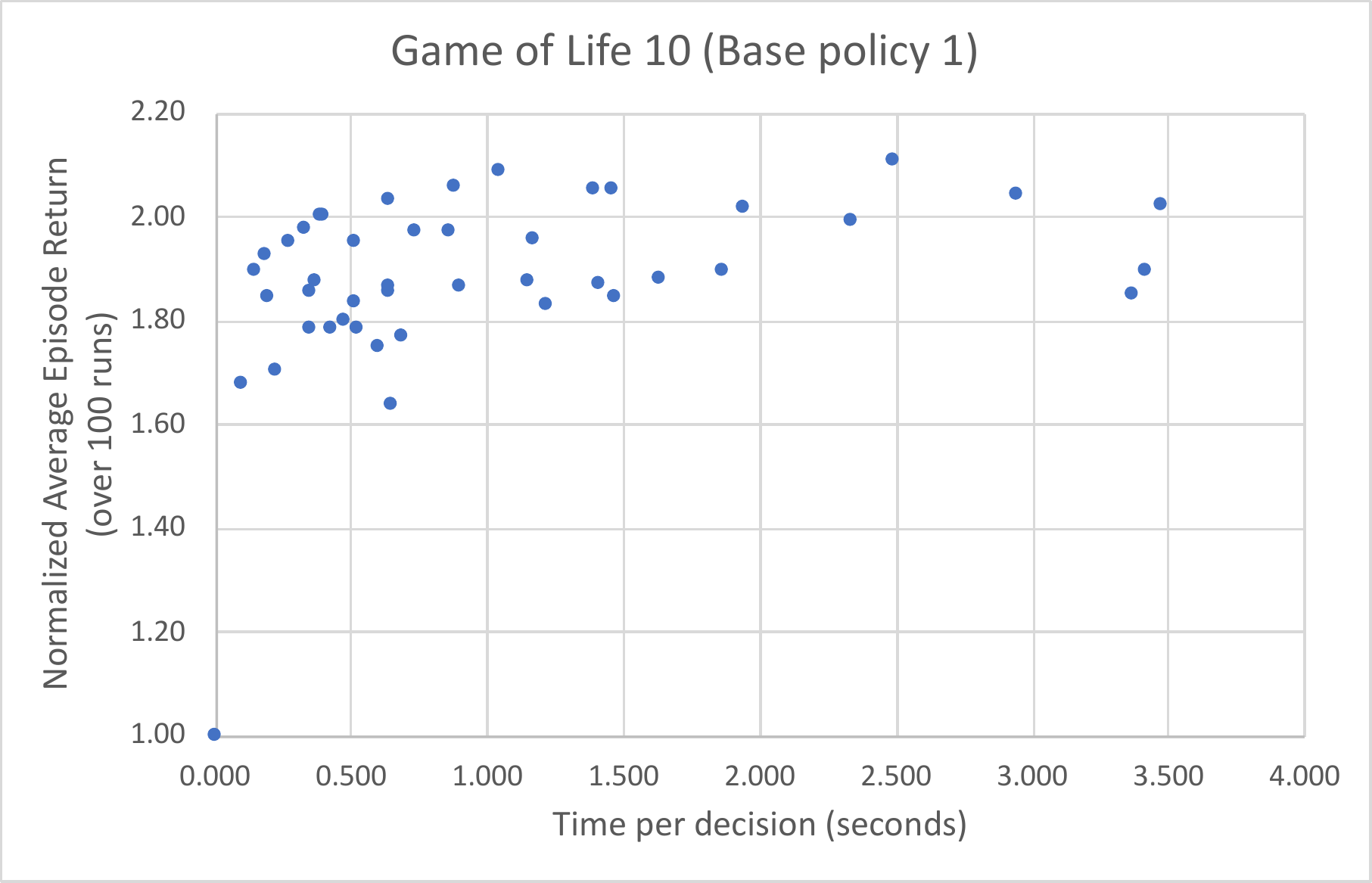}
    \includegraphics[width=0.45\textwidth, height=0.21\textheight]{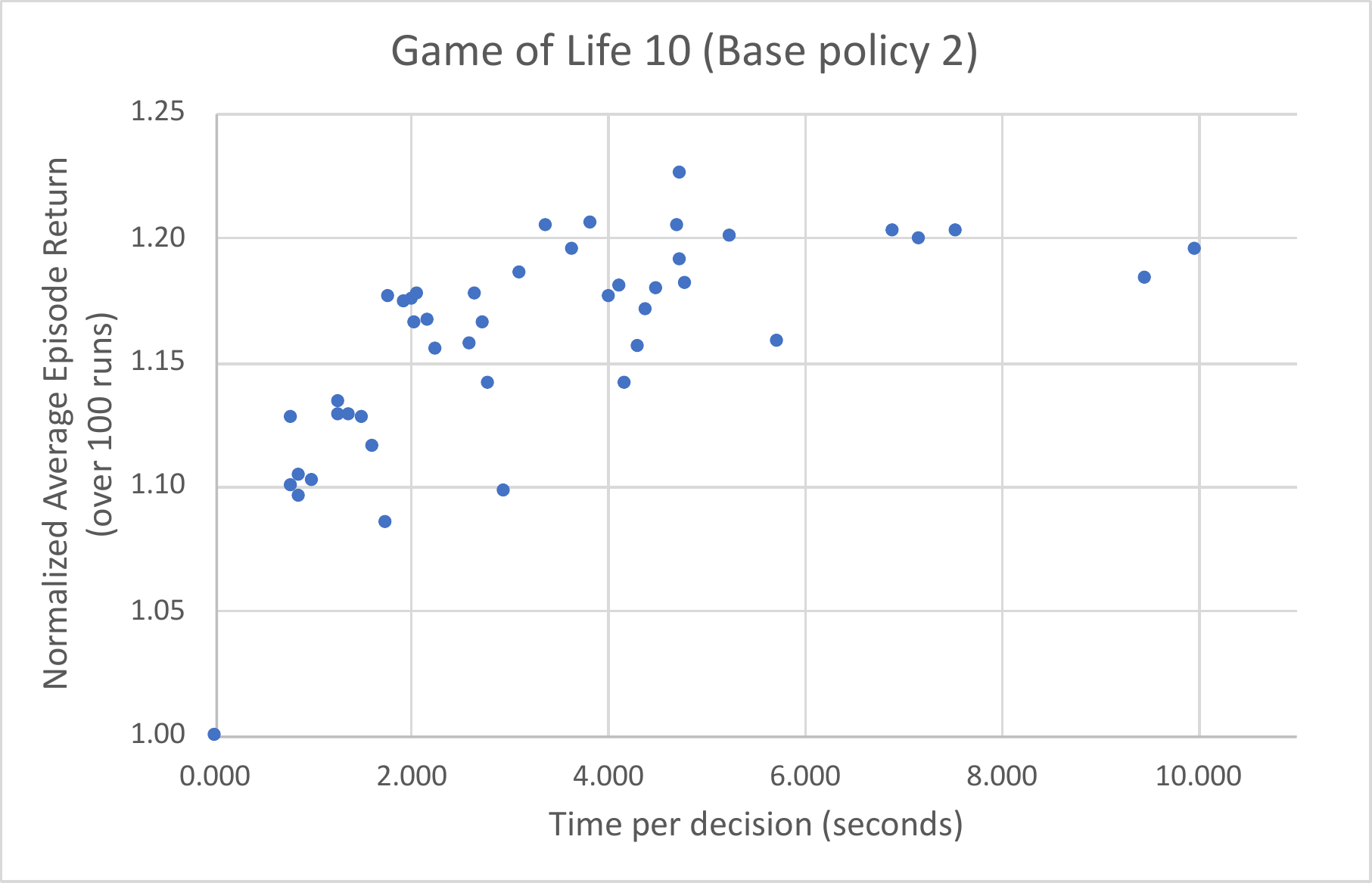}
\caption{Scatter plot of performance versus time-per-step for different LCDF choice functions applied to linear (left) and non-linear (right) base policies on the largest problem from Game-of-Life.}
\label{fig:2}
\vspace{-1em}
\end{center}
\end{figure*}

Our primary contribution is theoretical, however, here we illustrate the potential practical utility of our framework using the LDCF family. The experiments are intended to illustrate the viewpoint of a choice function being a hyperparameter to be tuned offline per application.  

We implemented a variant of Forward Search Sparse Sampling (FSSS) \cite{Walsh:2010:ISP:2898607.2898706} for approximately computing the online policy $\Pi^{\psi}_u$ for any LDCF $\psi$ and leaf-evaluation function $u$ using an MDP simulator. The key parameter, other than the choice-function, is the sampling width $C$, which controls how many state transitions are sampled for each action node. The supplementary material contains a summary of the algorithm. Our implementation is generally applicable and will be released as open source upon publication, along with information to reproduce our experiments. \\

{\bf Experiments Setup.} A full image of our experimental environment will be available upon publication. We run experiments in the domain Game-of-Life, a benchmark domain from the International Probabilistic Planning Competition. 
This is a grid-based domain with each grid-cell either being alive with some probability depending on its neighbors.
Actions allow for selecting one cell at each time step (or none) to set to be alive in the next  time step. The reward is based on the number of alive cells at each step. There are 10 problems of grid sizes $3 \times 3$, $4 \times 4$, $5 \times 5$ and $10 \times 3$. In addition different problems have different levels of stochasticity in the dynamics. 

We used supervised learning via imitation of a planner to train two base policies represented as neural networks, using the same approach as in \cite{issakkimuthu2018training}. Each network outputs a probability distribution over actions. Policy 1 is a linear network, while Policy 2 is a non-linear network with 3 hidden layers. For each base policy, we consider four leaf evaluation functions. The first is the constant zero function. The remaining three are neural networks with different configurations trained on a dataset of $5000$ state-value pairs obtained via Monte-Carlo simulation of each policy. All the networks have been trained using Tensorflow to minimize the mean squared error.

We experiment with $11$ choice functions in the LDCF family with parameters $H \in \{3,4,5\}$, $(D,K) \in \{(0,1), (1,1), (1,2), (2,1)\}$. The combination $(2,1)$ is not applicable for $H=3$. The number of discrepancies considered at the root node is $9$ and other internal nodes is $1$. 
The discrepancies are determined from the action probabilities given by the base policy. We use $C=3$ for FSSS.

Given one of these policies and a problem, we would like to identify the best combination of LDCF parameters and leaf evaluation function given a constraint on the time-per-step. In practice, this could be done in an offline tuning phase where different configurations are evaluated. Figure \ref{fig:2} shows a scatter plot of the normalized reward versus time-per-step for each of the 44 configurations (11 LDCF settings and 4 leaf evaluation functions). Here the normalized reward is the average reward per episode divided by the average reward per episode of the base policy. Values greater (less) than one perform better (worse) than the base policy. For both base policies all LDCF configurations perform better. There is also a larger improvement for base policy 1, which makes sense due to the fact that policy 2 is a much higher-quality policy and hence more difficult to improve over. We also see that the LDCF space shows considerable coverage of the performance vs. decision-time space, which shows the utility of offline tuning. There is a general trend toward better performance for larger decision times, but this is not uniformly true. There are complex interactions between the LDCF parameters and a particular problem, which makes it unlikely that a single feature such as time-per-decision is always the best indicator. \\

\begin{tablehere}
\begin{footnotesize}
\begin{center}
\centering
\begin{tabular}{l|rr|rr}
\hline
\hline
& \multicolumn{2}{c|}{\bf LDCF Policy 1} & \multicolumn{2}{c}{\bf LDCF Policy 2} \\
\hline
Prob. \# & \multicolumn{1}{c}{Normalized} & \multicolumn{1}{c|}{Decision}  & \multicolumn{1}{c}{Normalized} & \multicolumn{1}{c}{Decision}  \\
& \multicolumn{1}{c}{Avg. Reward} & \multicolumn{1}{c|}{Time (s)} & \multicolumn{1}{c}{Reward} & \multicolumn{1}{c}{Time (s)} \\
\hline
1 & 2.57 $\pm$ 0.07  & 0.081 & 1.08 $\pm$ 0.04 & 0.491 \\
\hline
2 & 1.27 $\pm$ 0.10  & 0.345 & 0.95 $\pm$ 0.06 & 0.631 \\
\hline
3 & 1.11 $\pm$ 0.05  & 0.129 & 0.92 $\pm$ 0.03 & 0.374 \\
\hline
4 & 1.51 $\pm$ 0.03  & 0.523 & 1.03 $\pm$ 0.02 & 0.830 \\
\hline
5 & 1.14 $\pm$ 0.03  & 1.084 & 1.00 $\pm$ 0.03 & 6.298 \\
\hline
6 & 1.05 $\pm$ 0.02  & 1.223 & 0.96 $\pm$ 0.02 & 9.163 \\
\hline
7 & 1.54 $\pm$ 0.02  & 0.523 & 1.05 $\pm$ 0.01 & 1.957 \\
\hline
8 & 1.21 $\pm$ 0.02  & 4.488 & 1.02 $\pm$ 0.02 & 10.463 \\
\hline
9 & 1.13 $\pm$ 0.02  & 3.262 & 0.96 $\pm$ 0.01 & 3.749 \\
\hline
10 & 2.11 $\pm$ 0.04  & 2.493 & 1.23 $\pm$ 0.02 & 4.746 \\
\hline \hline
\end{tabular}
\caption{Game-of-Life. Normalized reward of best performing LDCF configuration for each of the base policies. }
\label{tab:linear}
\end{center}
\end{footnotesize}
\end{tablehere}

Table \ref{tab:linear} gives results for each of the 10 problems, which includes the normalized rewards with confidence intervals for the best performing LDCF configuration for each of the policies. We see that for the linear policy, the best LDCF configuration is never significantly worse (lower interval is greater than 1) and often significantly better. For the second non-linear policy, we see that for most problems the LDCF performance is not significantly worse than the policy (confidence interval contains 1) and sometimes significantly better. For three problems, the upper confidence bound is less than one, indicating a significant decrease in performance. These problems happen to be among the most stochastic problems in the benchmark set. This suggests that a likely reason for the decrease in performance is due to the relatively small sampling width used for FSSS ($C=3$), which provides a poor approximation for such problems. 

\section{Summary}
We have introduced a framework for analyzing online search procedures for policy improvement guarantees. The key idea is to separate the action specification part of search from the search process and create an abstract concept called choice functions. A choice function instance will then be a parameter of search. We identify properties of choice functions to provide sufficient conditions for guaranteed online policy improvement when  the leaf evaluation function is perfect. Our main result is a bound on the performance of the online policy relative to the base policy for any leaf evaluation function. We have also introduced a parameterized class of choice functions called LDCF and discussed online search with FSSS for choice functions in this class. Our next directions are to thoroughly explore the practical application of the framework across a wide range of problems and to integrate notions of state abstraction into the framework.  

\section{Acknowledgements}
This work was supported by NSF grant IIS-1619433 and DARPA contract N66001-17-2-4030. We thank Intel for assisting with compute support for this work.

\bibliography{CF}
\bibliographystyle{abbrv}

\clearpage
\section*{Appendix}
\subsection*{Omitted Proofs}
\setcounter{lemma}{0}
\begin{lemma}
For any policy $\pi$ and value vector $V$, if \\ $V - B_{\pi}[V] \leq \delta$, then $V - V^{\pi} \leq \dfrac{\delta}{1 - \gamma}$. 
\end{lemma}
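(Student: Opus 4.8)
The plan is to exploit two standard structural properties of the policy-restricted backup operator $B_\pi$: it is \emph{monotone} (if $V \le V'$ element-wise then $B_\pi[V] \le B_\pi[V']$, since $B_\pi$ merely averages $V$ against fixed transition probabilities and adds a fixed reward), and it is \emph{affine under constant shifts} (adding a scalar $c$ to every component of $V$ adds exactly $\gamma c$ to every component of $B_\pi[V]$, because the successor probabilities under $\pi(s)$ sum to one). I will also use the classical fact that $B_\pi$ is a $\gamma$-contraction in max-norm whose unique fixed point is $V^\pi$, so that $B_\pi^n[V] \to V^\pi$ for any starting vector $V$.

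First I would define the shifted vector $W = V - \tfrac{\delta}{1-\gamma}\mathbf{1}$, where $\mathbf{1}$ is the all-ones vector, and show that it is a sub-fixed point, i.e. $B_\pi[W] \ge W$. By the constant-shift property, $B_\pi[W] = B_\pi[V] - \tfrac{\gamma\delta}{1-\gamma}\mathbf{1}$. The hypothesis $V - B_\pi[V] \le \delta$ rearranges to $B_\pi[V] \ge V - \delta\mathbf{1}$, so $B_\pi[W] \ge V - \delta\mathbf{1} - \tfrac{\gamma\delta}{1-\gamma}\mathbf{1}$. The scalar identity $1 + \tfrac{\gamma}{1-\gamma} = \tfrac{1}{1-\gamma}$ then collapses the right-hand side to exactly $V - \tfrac{\delta}{1-\gamma}\mathbf{1} = W$, establishing $B_\pi[W] \ge W$.

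Finally I would iterate. By monotonicity, $W \le B_\pi[W]$ forces $B_\pi[W] \le B_\pi^2[W]$, and inductively the sequence $\{B_\pi^n[W]\}$ is non-decreasing with every term $\ge W$. Since $B_\pi^n[W] \to V^\pi$ by the contraction property, taking the limit yields $V^\pi \ge W = V - \tfrac{\delta}{1-\gamma}\mathbf{1}$, which is precisely the claim $V - V^\pi \le \tfrac{\delta}{1-\gamma}$. An equivalent route is a direct induction establishing $B_\pi^n[V] \ge V - \delta\,\tfrac{1-\gamma^n}{1-\gamma}\mathbf{1}$ (using the same shift and monotonicity at each step) followed by $n \to \infty$.

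The computation is entirely routine; the only step requiring genuine care is recognizing the correct offset $\tfrac{\delta}{1-\gamma}$ and verifying the pre-fixed-point inequality $B_\pi[W] \ge W$, after which monotone convergence of the dynamic-programming operator does the rest. I expect no real obstacle, aside from remembering to invoke the standard fact that $B_\pi^n[V] \to V^\pi$, which follows from applying the contraction mapping theorem to $B_\pi$.
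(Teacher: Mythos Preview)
Your proof is correct, but it takes a different route from the paper's. The paper sets $D = V - V^\pi$ and, using $V^\pi = R^\pi + \gamma P^\pi V^\pi$ together with $B_\pi[V] = R^\pi + \gamma P^\pi V$, derives the identity $D = (V - B_\pi[V]) + \gamma P^\pi D$. It then observes that this is itself a Bellman fixed-point equation: $D$ is the value function of the MDP with transition matrix $P^\pi$ and ``reward'' vector $V - B_\pi[V] \le \delta$, so the classical bound $V^\pi \le R_{\max}/(1-\gamma)$ gives $D \le \delta/(1-\gamma)$ immediately. Your argument instead shifts $V$ by the conjectured bound to produce a sub-fixed-point $W$ of $B_\pi$, and then appeals to monotonicity and convergence of the iterates $B_\pi^n[W] \to V^\pi$. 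Both arguments are standard and equally rigorous; the paper's version is a bit shorter (one algebraic identity plus the stock geometric-series bound) and never needs to invoke the contraction or convergence of $B_\pi^n$, while yours makes more explicit why the offset $\delta/(1-\gamma)$ is exactly the right one.
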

\begin{proof}
Let $P^{\pi} \in \mathbb{R}^{n \times n}$, $R^{\pi} \in \mathbb{R}^n$ and $V^{\pi} \in \mathbb{R}^n$ be the state-transition matrix, reward and value functions of $\pi$ respectively. We use the matrix notation $V^\pi = R^\pi + \gamma P^\pi V^\pi$ to mean
\begin{flalign*}
V^\pi(s) = R(s, \pi(s)) + \gamma \displaystyle \sum_{s' \in S} P_{ss'}(\pi(s)) \cdot V^\pi(s')
\end{flalign*}
for all $s \in S$. If $D = V - V^{\pi}$ then
\begin{flalign*}
D &=  V - R^{\pi} - \gamma P^{\pi} V^{\pi}, \hspace{2.2em} \text{\emph{ since }} V^{\pi} = R^{\pi} + \gamma P^{\pi} V^{\pi} \hspace{1em} \\
&= V - R^{\pi} - \gamma P^{\pi} V^{\pi} + \gamma P^{\pi} V - \gamma P^{\pi} V, \\
& \hspace{12.9em} \text{\emph{add and subtract }} \gamma P^{\pi} V \\
&= \gamma P^{\pi} (V - V^{\pi}) + V - R^{\pi} - \gamma P^{\pi} V \\
&=  \gamma P^{\pi} D + (V - B_{\pi}[V]), \\
& \hspace{11.2em} \text{\emph{since }} B_{\pi}[V] = R^{\pi} + \gamma P^{\pi} V \\
&= (V - B_{\pi}[V]) +  \gamma P^{\pi} D. \\
\end{flalign*}
Therefore, $D = (V - B_{\pi}[V]) +  \gamma P^{\pi} D$ is the fixed-point equation of a policy with $P^{\pi}$ as the state-transition matrix and $V - B_{\pi}[V]$ as the reward function. Since the maximum value of the value function of a policy with maximum reward $R_{\max}$ is $\dfrac{R_{\max}}{1 - \gamma}$, we get
\begin{flalign*}
V  - V^\pi = D \leq \dfrac{V - B_{\pi}[V]}{1 - \gamma} \leq \dfrac{\delta}{1 - \gamma}. \hspace{5em}
\end{flalign*}
\end{proof}

\begin{lemma}
If a stationary choice function $\psi$ is $\pi$-consistent and monotonic and $u = V^\pi$ then for any path $p;s$ such that $1 \leq |p;s| \leq H(\psi)$, 
\begin{flalign*}
V^\psi_u(\lrcorner p;s) \geq V^\psi_u(p;s). \hspace{12em}
\end{flalign*}
\end{lemma}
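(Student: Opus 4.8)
The plan is to prove the inequality by induction on the height of the subtree below the node, i.e. on $n = H(\psi) - |p;s|$, establishing it simultaneously for every $\psi$-satisfying node at that height. The guiding intuition is that $\lrcorner p;s$ differs from $p;s$ in exactly two ways, both favorable: it sits one level shallower and so carries one extra level of lookahead before hitting the leaves, and by monotonicity its allowed action set $\psi(\lrcorner p;s)$ contains $\psi(p;s)$. Since $u = V^\pi$ and $\psi$ is $\pi$-consistent, neither extra depth nor extra actions can drive the backed-up value below the policy baseline, so the shorter path should dominate.

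Before the induction I would record two consequences of monotonicity. First, writing $p;s = s_0;a_0;\ldots;s_{k-1};a_{k-1};s$, applying $\psi(p';s')\subseteq\psi(\lrcorner p';s')$ along every prefix shows that each action of $\lrcorner p;s$ is still allowed (each prefix choice set only grows when the leading state–action pair is dropped), so $\lrcorner p;s$ is itself $\psi$-satisfying and $V^\psi_u(\lrcorner p;s)$ is a genuine node value. Second, if $a^\ast$ is a maximizing action at a non-leaf $p;s$ then $a^\ast \in \psi(p;s) \subseteq \psi(\lrcorner p;s)$, so the same action remains available one level up. I would also verify the path identity $(\lrcorner p;s);a^\ast;s' = \lrcorner(p;s;a^\ast;s')$, which is what makes the induction hypothesis at the children line up with the nodes produced one level higher.

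For the induction itself: in the base case $|p;s| = H(\psi)$, the node $p;s$ is a leaf with $V^\psi_u(p;s) = u(s) = V^\pi(s)$; the node $\lrcorner p;s$ sits at depth $H(\psi)-1$, and if it is also a leaf the two values coincide, while otherwise $\pi$-consistency makes $\pi(s)$ available there, its children lie at depth $H(\psi)$ and are thus leaves valued at $V^\pi$, and the Bellman equation for $V^\pi$ gives $V^\psi_u(\lrcorner p;s) \ge Q^\psi_u(\lrcorner p;s;\pi(s)) = V^\pi(s)$. In the inductive step with $p;s$ a non-leaf, I take $a^\ast$ maximizing at $p;s$, push it up via monotonicity so that $V^\psi_u(\lrcorner p;s) \ge Q^\psi_u(\lrcorner p;s;a^\ast)$, and compare the two $Q$-values: they share the same immediate reward and the same transition term, so their difference is the $P_{ss'}(a^\ast)$-weighted sum of $V^\psi_u((\lrcorner p;s);a^\ast;s') - V^\psi_u(p;s;a^\ast;s')$. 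By the path identity each summand equals $V^\psi_u(\lrcorner q) - V^\psi_u(q)$ for a child $q = p;s;a^\ast;s'$ of height $n-1$, which is nonnegative by the induction hypothesis; hence $V^\psi_u(\lrcorner p;s) \ge Q^\psi_u(\lrcorner p;s;a^\ast) \ge Q^\psi_u(p;s;a^\ast) = V^\psi_u(p;s)$.

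The delicate point I expect to be the main obstacle is the case where a leaf appears at shallow depth (possible when $h(\psi) < H(\psi)$), so that $p;s$ is a leaf even though $|p;s| < H(\psi)$; there the children of $\lrcorner p;s$ need not be leaves and the clean cancellation above is unavailable. To close it I would prove, as an independent helper by the same height induction using only $\pi$-consistency and $u = V^\pi$ (so without invoking this lemma or Proposition~\ref{prop:chaining}, avoiding circularity), that $V^\psi_u(q) \ge V^\pi(s_q)$ at every $\psi$-satisfying node $q$ ending in state $s_q$. If $\lrcorner p;s$ is itself a leaf the values coincide; otherwise $\pi(s)$ is available and applying this helper to the children of $\lrcorner p;s$ under $\pi(s)$ again yields $V^\psi_u(\lrcorner p;s) \ge V^\pi(s) = V^\psi_u(p;s)$, completing the argument.
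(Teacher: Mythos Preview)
Your proposal is correct and follows essentially the same route as the paper: downward induction on $|p;s|$ from $H(\psi)$, with the same base case (children of $\lrcorner p;s$ at depth $H(\psi)$ are leaves valued $V^\pi$, so plugging in $\pi(s)$ recovers $V^\pi(s)$) and the same inductive mechanism (monotonicity lets actions at $p;s$ be replayed at $\lrcorner p;s$, and the induction hypothesis lifts the child values). Your argument via a single maximizer $a^\ast$ and the path identity $(\lrcorner p;s);a^\ast;s' = \lrcorner(p;s;a^\ast;s')$ is just a repackaging of the paper's two max-over-subset steps.

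The one genuine difference is that you explicitly treat the case where $p;s$ is a leaf at depth strictly below $H(\psi)$ while $\lrcorner p;s$ is not. The paper's inductive Case~2 silently assumes $\psi(p;s)\neq\emptyset$ in its final equality $\max_{a\in\psi(p;s)}[\cdots]=V^\psi_u(p;s)$, which fails when $\psi(p;s)=\emptyset$. Your helper inequality $V^\psi_u(q)\ge V^\pi(s_q)$, proved by the same height induction using only $\pi$-consistency and $u=V^\pi$, is exactly the right patch and does not introduce circularity with Proposition~\ref{prop:chaining}. So your version is a slightly more complete rendering of the same proof.
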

\begin{proof}
By definition,
\begin{small}
\begin{flalign*}
V_u^\psi(p;s) = \left \{
\begin{array}{lr}
V^\pi(s), & \hspace{-7.4em} \text{\emph{ if }} \psi(p;s) = \emptyset, \\
\displaystyle \max_{a \in  \psi(p;s)} R(s,a) + \gamma \displaystyle \sum_{s' \in S}  P_{ss'}(a) V_u^\psi(p;s;a;s'), \\
& \hspace{-7.4em} \text{\emph{ otherwise}}.\\
\end{array}
\right.
\end{flalign*}
\end{small}

\noindent The proof is by induction on $|p;s|$. If $|p;s| = H(\psi)$ then $\psi(p;s) = \emptyset$. \\

\noindent \emph{Induction Basis. } Let $p;s$ be such that $|p;s| = H(\psi)$. 

\noindent Case $(1). \text{ } \psi(\lrcorner p;s) = \emptyset$. Since $|p;s|=H$, we have $\psi(p;s) = \emptyset$ and
\begin{flalign*}
V_u^\psi(\lrcorner p;s) = V_u^\psi(p;s) = V^\pi(s). \hspace{6em}
\end{flalign*}

\noindent Case $(2). \text{ } \psi(\lrcorner p;s) \neq \emptyset$. 
\begin{small}
\begin{flalign*}
V_u^\psi(\lrcorner p;s) &= \displaystyle \max_{a \in  \psi(\lrcorner p;s)} R(s,a) + \gamma \displaystyle \sum_{s' \in S}  P_{ss'}(a) \cdot V_u^\psi(\lrcorner p;s;a;s') \\
&  = \displaystyle \max_{a \in  \psi(\lrcorner p;s)} R(s,a) + \gamma \displaystyle \sum_{s' \in S} P_{ss'}(a) \cdot V^\pi(s'), \\
& \hspace{13em} \text{\emph{since }} |\lrcorner p;s;a;s'| = H \\
& \geq R(s,\pi(s)) + \gamma \displaystyle \sum_{s' \in S} P_{ss'}(\pi(s)) \cdot V^\pi(s'), \\
& \hspace{13.9em} \text{\emph{since $\pi(s) \in \psi(p;s)$}} \\
& = V^\pi(s) = V_u^\psi(p;s).
\end{flalign*}
\end{small}

\noindent \emph{Induction Hypothesis. } Assume that  $V_u^\psi(\lrcorner p;s) \geq V_u^\psi(p;s)$ for $p;s$ such that $|p;s| = k+1$.\\

\noindent \emph{Inductive Proof. } Let $p;s$ be such that $|p;s| = k$. 

\noindent Case $(1). \text{ } \psi(\lrcorner p;s) = \emptyset$. Since $\psi$ is monotonic, $\psi(p;s) \subseteq \psi(\lrcorner p;s)$ and hence $\psi(p;s) = \emptyset$. Therefore, 
\begin{flalign*}
V_u^\psi(\lrcorner p;s) = V_u^\psi(p;s) = V^\pi(s). \hspace{4em}
\end{flalign*}

\noindent Case $(2). \text{ }\psi(\lrcorner p;s) \neq \emptyset$. 
\begin{small}
\begin{flalign*}
V_u^\psi(\lrcorner p;s) &= \displaystyle \max_{a \in  \psi(\lrcorner p;s)} R(s,a) + \gamma \displaystyle \sum_{s' \in S} P_{ss'}(a) \cdot V_u^\psi(\lrcorner p;s;a;s') \\
& \geq \displaystyle \max_{a \in  \psi(\lrcorner p;s)} R(s,a) + \gamma \displaystyle \sum_{s' \in S} P_{ss'}(a) \cdot V_u^\psi(p;s;a;s'), \\
& \hspace{2em} \text{\emph{by induction hypothesis, since }} |p;s;a;s'| = k+1 \\
& \geq \displaystyle \max_{a \in  \psi(p;s)} R(s,a) + \gamma \displaystyle \sum_{s' \in S} P_{ss'}(a) \cdot V_u^\psi(p;s;a;s'), \\
& \hspace{12em} \text{\emph{since }} \psi(p;s) \subseteq \psi(\lrcorner p;s) \\
& = V_u^\psi(p;s).
\end{flalign*}
\end{small}
\end{proof}

\begin{lemma}
\label{lemma:contraction2}
If $\psi$ is a stationary choice function and $\lVert u - u' \rVert_\infty \leq \epsilon$ then for any path $p;s$ with $|p;s| \leq h(\psi)$,
\begin{flalign*}
\left | V^\psi_u(p;s) - V^\psi_{u'}(p;s) \right | \leq \epsilon \gamma^{h(\psi) - |p;s|}. \hspace{3em}
\end{flalign*}
\end{lemma}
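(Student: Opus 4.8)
The plan is to bound the difference $|V^\psi_u(p;s) - V^\psi_{u'}(p;s)|$ by tracking how it propagates up the tree, exploiting two facts. First, in the difference of two $Q$-values the reward terms $R(s,a)$ cancel, so $Q^\psi_u(p;s;a) - Q^\psi_{u'}(p;s;a) = \gamma \sum_{s'} P_{ss'}(a)\,(V^\psi_u(p;s;a;s') - V^\psi_{u'}(p;s;a;s'))$; since $\sum_{s'} P_{ss'}(a) = 1$ and $\max$ is non-expansive in the max-norm, a single backup contracts the largest child difference by a factor of exactly $\gamma$. Second, because $h(\psi)$ is the minimum depth of any leaf, every node $p;s$ with $|p;s| < h(\psi)$ is internal, i.e. $\psi(p;s) \neq \emptyset$, so its value is genuinely obtained from the backup rather than read off as $u(s)$.

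First I would establish a uniform bound holding at \emph{every} node of $T^\psi(s_0)$, namely $|V^\psi_u(p;s) - V^\psi_{u'}(p;s)| \leq \epsilon$, by bottom-up (structural) induction on the finite tree: at a leaf the two values are $u(s)$ and $u'(s)$, which differ by at most $\epsilon$ by hypothesis; at an internal node the non-expansion property above together with $\gamma \leq 1$ turns a child bound of $\epsilon$ into $\gamma\epsilon \leq \epsilon$. This depth-independent bound is what lets the base case of the main argument go through.

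Then I would prove the stated inequality by induction on $d = h(\psi) - |p;s|$, ranging over $d = 0, 1, \ldots, h(\psi)$. For the base case $d = 0$ (i.e. $|p;s| = h(\psi)$) the node may or may not be a leaf, so I cannot simply invoke $|u(s)-u'(s)| \leq \epsilon$; instead the uniform bound gives exactly $\epsilon = \epsilon\gamma^{0}$. For the inductive step with $|p;s| < h(\psi)$, the second fact guarantees the node is internal, so $V^\psi_u(p;s) = \max_{a \in \psi(p;s)} Q^\psi_u(p;s;a)$ and its state-children $p;s;a;s'$ sit at depth $|p;s|+1 \leq h(\psi)$. Applying the induction hypothesis to those children gives a bound of $\epsilon\gamma^{\,h(\psi)-|p;s|-1}$, and the single-backup contraction by $\gamma$ then yields $\epsilon\gamma^{\,h(\psi)-|p;s|}$, as required.

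The main obstacle I anticipate is precisely the base case: at depth exactly $h(\psi)$ a node need not be a leaf, so the naive attempt to seed the induction directly from leaf values breaks down. Handling this cleanly is the reason for first proving the depth-independent $\epsilon$ bound; with that in hand the remaining steps are routine applications of the non-expansiveness of $\max$ and the discount factor $\gamma$. I would also note in passing that one could instead induct on the number of levels from $p;s$ down to its nearest descendant leaf, but the two-stage argument above keeps the bookkeeping simpler.
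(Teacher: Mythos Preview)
Your proposal is correct and follows essentially the same two-stage structure as the paper's proof: the paper first proves a separate lemma establishing the uniform bound $|V^\psi_u(p;s)-V^\psi_{u'}(p;s)|\leq\epsilon$ for all paths with $h(\psi)\leq|p;s|\leq H(\psi)$ (exactly your depth-independent bound, restricted to the depths where it is needed), and then runs the same downward induction from $|p;s|=h(\psi)$ using the $\gamma$-contraction of a single backup. Your anticipation of the base-case obstacle and your resolution of it match the paper's treatment precisely.
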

\begin{proof}
The proof is by induction on $|p;s|$ for $|p;s| \leq h(\psi)$.\\

\noindent \emph{Induction Basis. } The lemma holds for $|p;s| = h(\psi)$, since by lemma \ref{lemma:contractionsubresult},  
\begin{flalign*}
\left | V^\psi_u(p;s) - V^\psi_{u'}(p;s) \right | \leq \epsilon \hspace{6em}
\end{flalign*}
for all $p;s$ with $h(\psi) \leq |p;s| \leq H(\psi)$.\\

\noindent \emph{Induction Hypothesis. } Assume that the lemma holds for $|p;s| = h(\psi) - (k - 1)$.  \\

\noindent \emph{Inductive Proof. } Let $p;s$ be a path such that $|p;s| = h(\psi) - k$. 
By the definition of $h(\psi)$, for any path $p;s$ such that $|p;s| < h(\psi)$, we have $\psi(p;s) \neq \emptyset$. Therefore, 
\begin{small}
\begin{flalign*}
\left |V^\psi_u(p;s) - V^\psi_{u'}(p;s)\right | &= \bigg | \displaystyle \max_{a \in \psi(p;s)} \bigg \{ R(s, a) + \hspace{7.5em} \\
&  \hspace{-6em} \gamma \displaystyle \sum_{s' \in S} P_{ss'}(a) \cdot V^\psi_u(p;s;a;s') \bigg \} - \displaystyle \max_{a \in \psi(p;s)} \bigg \{R(s, a) + \\
&  \hspace{2.9em} \gamma \displaystyle \sum_{s' \in S} P_{ss'}(a) \cdot V^\psi_{u'}(p;s;a;s') \bigg \} \bigg | \\
& \hspace{-8.5em} \leq \displaystyle \max_{a \in \psi(p;s)} \bigg |R(s, a) + \gamma \displaystyle \sum_{s' \in S} P_{ss'}(a) \cdot V^\psi_u(p;s;a;s') - \\
&  \hspace{-3.8em} \bigg \{ R(s, a) + \gamma \displaystyle \sum_{s' \in S} P_{ss'}(a) \cdot V^\psi_{u'}(p;s;a;s') \bigg \} \bigg | \\
& \hspace{-8.5em} = \displaystyle \max_{a \in \psi(p;s)} \bigg |\gamma \displaystyle \sum_{s' \in S} P_{ss'}(a) ( V^\psi_u(p;s;a;s') - V^\psi_{u'}(p;s;a;s')) \\
& \hspace{-8.5em} \leq \displaystyle \max_{a \in \psi(p;s)} \gamma \displaystyle \sum_{s' \in S} P_{ss'}(a) \text{ } \bigg | V^\psi_u(p;s;a;s') - V^\psi_{u'}(p;s;a;s') \bigg |\\
& \hspace{-8.5em} \leq \displaystyle \max_{a \in \psi(p;s)} \gamma \displaystyle \sum_{s' \in S} P_{ss'}(a) \text{ }  \epsilon \gamma^{h(\psi) - (k-1)}, \\
&  \hspace{5em}  \text{\emph{by the induction hypothesis}} \\
& \hspace{-8.5em} = \epsilon \gamma^{h(\psi) - k}.
\end{flalign*}
\end{small}
\end{proof}

\begin{lemma}
If a stationary choice function $\psi$ is $\pi$-consistent and monotonic  and $\lVert u - V^\pi \rVert_\infty \leq \epsilon_\pi$, then for $\pi' = \Pi^\psi_u$,
$V_{u, 0}^\psi - B_{\pi'}[V_{u, 0}^\psi] \leq \epsilon_\pi \gamma^{h(\psi)} (1 + \gamma)$. 
\end{lemma}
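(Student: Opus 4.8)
The plan is to unfold both $V_{u,0}^\psi$ and $B_{\pi'}[V_{u,0}^\psi]$ at a fixed root state $s$ and reduce the claim to a bound on a single depth-one quantity. Writing $a = \pi'(s) = \Pi_u^\psi(s)$, the definition of the online policy gives $V_{u,0}^\psi(s) = Q_u^\psi(s;a) = R(s,a) + \gamma \sum_{s'} P_{ss'}(a)\, V_u^\psi(s;a;s')$, whereas the policy-restricted backup is $B_{\pi'}[V_{u,0}^\psi](s) = R(s,a) + \gamma\sum_{s'} P_{ss'}(a)\, V_{u,0}^\psi(s')$. Subtracting cancels the shared reward term and the common transition weights, so
\[
V_{u,0}^\psi(s) - B_{\pi'}[V_{u,0}^\psi](s) = \gamma \sum_{s'} P_{ss'}(a)\big(V_u^\psi(s;a;s') - V_{u,0}^\psi(s')\big).
\]
Everything thus reduces to bounding, for each successor $s'$, the gap between the value of the depth-one node $s;a;s'$ in the tree rooted at $s$ and the value $V_{u,0}^\psi(s') = V_u^\psi(s')$ of the root node in its own tree.

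The key step is to control this gap by routing through the exact leaf values $\bar u = V^\pi$. I would split it as $V_u^\psi(s;a;s') - V_u^\psi(s') = [V_u^\psi(s;a;s') - V_{\bar u}^\psi(s;a;s')] + [V_{\bar u}^\psi(s;a;s') - V_{\bar u}^\psi(s')] + [V_{\bar u}^\psi(s') - V_u^\psi(s')]$. The first bracket is a leaf-swap at a path of length one, so Lemma \ref{lemma:contraction} (with $u' = \bar u$ and $\lVert u - \bar u\rVert_\infty \le \epsilon_\pi$) bounds it by $\epsilon_\pi \gamma^{h(\psi)-1}$; the last bracket is the same swap at the length-zero root, bounded by $\epsilon_\pi \gamma^{h(\psi)}$. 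The crucial observation for the middle bracket is that $s'$ is exactly $\lrcorner(s;a;s')$, the path obtained by dropping the leading state-action pair $s;a$; hence with exact leaves $u = V^\pi$ the monotonicity result, Lemma \ref{lemma:monotonicity}, gives $V_{\bar u}^\psi(s') \ge V_{\bar u}^\psi(s;a;s')$, making the middle bracket nonpositive.

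Combining the three pieces yields $V_u^\psi(s;a;s') - V_{u,0}^\psi(s') \le \epsilon_\pi \gamma^{h(\psi)-1}(1+\gamma)$ for every $s'$. Substituting into the reduced identity and using $\sum_{s'} P_{ss'}(a) = 1$ multiplies this by $\gamma$, giving the claimed bound $\epsilon_\pi \gamma^{h(\psi)}(1+\gamma)$ uniformly in $s$. I expect the main obstacle to be the middle step: one must recognize that the depth-one node and the root of the successor tree are related by a single $\lrcorner$ shift so that the monotonicity lemma applies directly, and one must check the side conditions --- $1 \le h(\psi)$ for the length-one contraction bound and $1 \le |s;a;s'| \le H(\psi)$ for monotonicity --- which hold because a well-defined $\Pi_u^\psi$ requires $\psi$ to be nonempty at every root and hence $h(\psi) \ge 1$.
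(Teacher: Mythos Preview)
Your proof is correct and mirrors the paper's argument essentially step for step: both unfold $V_{u,0}^\psi(s)$ and $B_{\pi'}[V_{u,0}^\psi](s)$ at the greedy action $a=\pi'(s)$, reduce to bounding $V_u^\psi(s;a;s')-V_u^\psi(s')$, and then insert $\bar u = V^\pi$ twice to invoke Lemma~\ref{lemma:contraction} at depths $1$ and $0$ while Lemma~\ref{lemma:monotonicity} handles the middle term. Your explicit three-bracket decomposition and check of the side conditions ($h(\psi)\ge 1$, $1\le |s;a;s'|\le H(\psi)$) are slightly more careful than the paper's presentation but not a different method.
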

\begin{proof}
Let $\bar{u} = V^\pi$ to simplify notation.
\begin{small}
\begin{flalign*}
& V_{u, 0}^\psi(s) - B_{\pi'}[V_{u, 0}^\psi](s) = R(s, \pi'(s)) + \hspace{20em} \\
& \hspace{4em} \gamma \displaystyle \sum_{s' \in S} P_{ss'}(\pi'(s)) \cdot V_{u, 1}^\psi(s;\pi'(s);s') - \bigg \{ R(s, \pi'(s)) +  \\
& \hspace{14em} \gamma \displaystyle \sum_{s' \in S} P_{ss'}(\pi'(s)) \cdot V_{u, 0}^\psi(s') \bigg \} \\
& \hspace{1.5em} = \gamma \displaystyle \sum_{s' \in S} P_{ss'}(\pi'(s)) \cdot (V_{u, 1}^\psi(s;\pi'(s);s') - V_{u, 0}^\psi(s'))\\
& \hspace{1.5em} \leq \gamma \displaystyle \sum_{s' \in S} P_{ss'}(\pi'(s)) \bigg \{V_{\bar{u}, 1}^\psi(s;\pi'(s);s') + \epsilon_\pi \gamma^{h(\psi) - 1} \\
& \hspace{6em} - \bigg ( V_{\bar{u}, 0}^\psi(s') - \epsilon_\pi \gamma^{h(\psi)} \bigg ) \bigg \}, \hspace{2.3em} \text{\emph{ using lemma \ref{lemma:contraction}}}\\
& \hspace{1.5em} \leq \gamma \displaystyle \sum_{s' \in S} P_{ss'}(\pi'(s)) \cdot ( \epsilon_\pi \gamma^{h(\psi) - 1} + \epsilon_\pi \gamma^{h(\psi)}), \hspace{0.2em} \text{\emph{ by lemma \ref{lemma:monotonicity}}} \\
& \hspace{14.6em} V_{\bar{u}, 1}^\psi(s;\pi'(s);s') \leq V_{\bar{u}, 0}^\psi(s') \\
& \hspace{1.5em} = \epsilon_\pi \gamma^{h(\psi)} (1 + \gamma).
\end{flalign*}
\end{small}
\end{proof}

\begin{lemma}
\label{lemma:contractionsubresult}
If $\psi$ is a stationary choice function and $\lVert u - u' \rVert_\infty \leq \epsilon$ then for any path $p;s$ such that $h(\psi) \leq |p;s| \leq H(\psi)$, 
\begin{flalign*}
\left | V^\psi_u(p;s) - V^\psi_{u'}(p;s) \right | \leq \epsilon. \hspace{6em}
\end{flalign*}
\end{lemma}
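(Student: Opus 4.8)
The plan is to prove the bound by downward induction on the depth $|p;s|$, ranging from $H(\psi)$ down to $h(\psi)$. The crucial observation is that $V^\psi_u$ and $V^\psi_{u'}$ are computed by the same recursion over an identical tree: the tree structure, the rewards $R$, the transition probabilities $P$, and the choice function $\psi$ are all shared, and the two value functions differ only in the leaf evaluation applied at leaf paths. Hence every discrepancy between them propagates upward solely from the leaves, where it is at most $\epsilon$ by assumption, and is contracted by a factor of $\gamma$ at each level of backing up. No monotonicity or $\pi$-consistency is needed here; this is a pure contraction estimate.

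For the base case, take $|p;s| = H(\psi)$. Since $H(\psi)$ is the maximal length of any $\psi$-satisfying path, such a node must be a leaf, i.e. $\psi(p;s) = \emptyset$; thus $V^\psi_u(p;s) = u(s)$ and $V^\psi_{u'}(p;s) = u'(s)$, giving $|V^\psi_u(p;s) - V^\psi_{u'}(p;s)| = |u(s) - u'(s)| \leq \epsilon$. For the inductive step, I would consider a node at depth $d$ with $h(\psi) \leq d < H(\psi)$ and assume the claim at depth $d+1$. If $\psi(p;s) = \emptyset$ the node is again a leaf and the difference is immediately $\leq \epsilon$. Otherwise, I would apply the standard inequality $|\max_a f(a) - \max_a g(a)| \leq \max_a |f(a) - g(a)|$ to reduce to bounding $|Q^\psi_u(p;s;a) - Q^\psi_{u'}(p;s;a)|$ for each allowed $a$. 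Since the rewards cancel, this quantity equals $\gamma \left| \sum_{s'} P_{ss'}(a)\,(V^\psi_u(p;s;a;s') - V^\psi_{u'}(p;s;a;s')) \right|$, which is at most $\gamma \sum_{s'} P_{ss'}(a)\,\epsilon = \gamma \epsilon \leq \epsilon$ by the induction hypothesis applied to the children at depth $d+1$, together with $\sum_{s'} P_{ss'}(a) = 1$ and $\gamma \leq 1$.

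The main point to handle carefully is that leaf paths can occur at any depth in the interval $[h(\psi), H(\psi)]$, not just at the extremes, so the inductive step must explicitly split into the leaf case ($\psi(p;s)=\emptyset$) and the non-leaf case rather than assuming interior nodes are never leaves. I also need to verify that the children invoked in the non-leaf case remain within the valid depth range: from $d \geq h(\psi)$ we get $d+1 > h(\psi)$, and from $d < H(\psi)$ we get $d+1 \leq H(\psi)$, so the induction hypothesis indeed applies to every child $p;s;a;s'$. This self-contained $\epsilon$-bound on the band $[h(\psi),H(\psi)]$ is precisely what supplies the induction basis for Lemma \ref{lemma:contraction2}.
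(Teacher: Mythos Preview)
Your proposal is correct and follows essentially the same approach as the paper: downward induction on $|p;s|$ from $H(\psi)$ to $h(\psi)$, with the base case handled by the leaf evaluation at $H(\psi)$, and the inductive step split into the leaf case ($\psi(p;s)=\emptyset$) and the non-leaf case using $|\max_a f(a)-\max_a g(a)|\le \max_a|f(a)-g(a)|$, cancellation of rewards, and the convex combination under $P_{ss'}(a)$. If anything, you are slightly more explicit than the paper in verifying that the children lie in the valid depth range and in invoking $\gamma\le 1$.
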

\begin{proof}
The proof is by induction on $|p;s|$ for $h(\psi) \leq |p;s| \leq H(\psi)$.\\

\noindent \emph{Induction Basis. } Let  $p;s$ be a path such that $|p;s| = H(\psi)$. 
The lemma holds for $|p;s| = H(\psi)$, since $\psi(p;s) = \emptyset$ and
\begin{flalign*}
\left |V^\psi_u(p;s) - V^\psi_{u'}(p;s)\right | &= |u(s) - u'(s)| = \epsilon. \hspace{3em} 
\end{flalign*}

\noindent \emph{Induction Hypothesis. } Assume that the lemma holds for any path $p;s$ such that $h(\psi) < |p;s| = H(\psi) - (k - 1)$. \\

\noindent \emph{Inductive Proof. } \\
Let $p;s$ be a path such that $|p;s| = H(\psi) - k$. \\
\noindent Case $(1). $ If $\psi(p;s) = \emptyset$ then 
\begin{flalign*}
\left | V^\psi_u(p;s) - V^\psi_{u'}(p;s) \right | \leq \epsilon. \hspace{6em}\\
\end{flalign*}
\noindent Case $(2). $ If $\psi(p;s) \neq \emptyset$ then 
\begin{small}
\begin{flalign*}
\left |V^\psi_u(p;s) - V^\psi_{u'}(p;s)\right | &= \bigg | \displaystyle \max_{a \in \psi(p;s)} \bigg \{ R(s, a) + \hspace{7em} \\
&  \hspace{-6em} \gamma \displaystyle \sum_{s' \in S} P_{ss'}(a) \cdot V^\psi_u(p;s;a;s') \bigg \} - \displaystyle \max_{a \in \psi(p;s)} \bigg \{R(s, a) + \\
&  \hspace{2.9em} \gamma \displaystyle \sum_{s' \in S} P_{ss'}(a) \cdot V^\psi_{u'}(p;s;a;s') \bigg \} \bigg | \\
& \hspace{-8.5em} \leq \displaystyle \max_{a \in \psi(p;s)} \bigg |R(s, a) + \gamma \displaystyle \sum_{s' \in S} P_{ss'}(a) \cdot V^\psi_u(p;s;a;s') - \\
&  \hspace{-3.8em} \bigg \{ R(s, a) + \gamma \displaystyle \sum_{s' \in S} P_{ss'}(a) \cdot V^\psi_{u'}(p;s;a;s') \bigg \} \bigg | \\
& \hspace{-8.5em} = \displaystyle \max_{a \in \psi(p;s)} \bigg |\gamma \displaystyle \sum_{s' \in S} P_{ss'}(a) ( V^\psi_u(p;s;a;s') - V^\psi_{u'}(p;s;a;s') ) \bigg | \\
& \hspace{-8.5em} \leq \displaystyle \max_{a \in \psi(p;s)} \gamma \displaystyle \sum_{s' \in S} P_{ss'}(a) \text{ } \bigg | V^\psi_u(p;s;a;s') - V^\psi_{u'}(p;s;a;s') \bigg | \\
& \hspace{-8.5em} \leq \displaystyle \max_{a \in \psi(p;s)} \gamma \displaystyle \sum_{s' \in S} P_{ss'}(a) \cdot  \epsilon, \hspace{2em} \text{\emph{by the induction hypothesis}} \\
& \hspace{-8.5em} \leq \epsilon.
\end{flalign*}
\end{small}
\end{proof}

\setcounter{proposition}{0}
\begin{proposition}
If a stationary choice function $\psi$ is $\pi$-consistent and monotonic and $u = V^\pi$, then $V^\psi_{u}(s) \geq V^\pi(s)$.
\end{proposition}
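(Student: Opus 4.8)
The plan is to prove the stronger, node-wise statement that $V^\psi_u(p;s) \geq V^\pi(s)$ for \emph{every} $\psi$-satisfying path $p;s$, and then read off the proposition as the special case $|p;s| = 0$. The intuition is that because $\psi$ is $\pi$-consistent, at every internal node the action $\pi(s)$ is available, so the maximization defining $V^\psi_u(p;s)$ can only do at least as well as the single choice $\pi(s)$; and because $u = V^\pi$, the value obtained by committing to $\pi$ at every node down to the leaves is exactly $V^\pi(s)$ (the tree backup restricted to $\pi$ is just the policy-restricted backup $B_\pi$ iterated, bottoming out at leaf values equal to $V^\pi$). Thus the search value dominates the base-policy value at every node, and in particular at the root.

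Concretely, I would induct backwards on path length, from $|p;s| = H(\psi)$ down to $0$. For the base case, any leaf path has $\psi(p;s) = \emptyset$ and hence $V^\psi_u(p;s) = u(s) = V^\pi(s)$, giving equality. For the inductive step at an internal node $p;s$ with $\psi(p;s) \neq \emptyset$, I use $\pi$-consistency ($\pi(s) \in \psi(p;s)$) to lower-bound the max by the $\pi(s)$-term: $V^\psi_u(p;s) \geq R(s,\pi(s)) + \gamma \sum_{s'} P_{ss'}(\pi(s)) \, V^\psi_u(p;s;\pi(s);s')$. Each child $p;s;\pi(s);s'$ has length $|p;s|+1$, so the induction hypothesis gives $V^\psi_u(p;s;\pi(s);s') \geq V^\pi(s')$; substituting and recognizing the right-hand side as $R(s,\pi(s)) + \gamma \sum_{s'} P_{ss'}(\pi(s)) V^\pi(s') = V^\pi(s)$ closes the step. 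Specializing to $|p;s| = 0$ yields $V^\psi_u(s) \geq V^\pi(s)$ for all $s$.

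Equivalently, this is the chaining of Lemma \ref{lemma:monotonicity} announced in the text: repeatedly applying $V^\psi_u(\lrcorner p;s) \geq V^\psi_u(p;s)$ strips state-action pairs from the front of a leaf path until only the root state remains, where the value is exactly $V^\pi$; monotonicity is what guarantees that each stripped path is still $\psi$-satisfying so the lemma applies along the whole chain. The main point requiring care is the bookkeeping around leaves: leaf paths need not all sit at depth $H(\psi)$ but may occur at any depth between $h(\psi)$ and $H(\psi)$, so the backward induction must treat \emph{every} node with $\psi(p;s)=\emptyset$ as a base case rather than only the deepest layer. The only other subtlety is keeping the inequality oriented correctly: we lower-bound the node value by the $\pi$-action branch rather than the realized $\arg\max$, which is exactly where $\pi$-consistency enters, while the exactness $u=V^\pi$ at the leaves is what prevents any loss from accumulating as the bound propagates upward.
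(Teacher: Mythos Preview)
Your direct backward induction is correct and is actually a cleaner route than the paper's own argument. The paper proves Proposition~\ref{prop:chaining} by chaining Lemma~\ref{lemma:monotonicity}: it picks a leaf path $p;s$ (so $V^\psi_u(p;s)=V^\pi(s)$) and then repeatedly applies $V^\psi_u(\lrcorner p;s)\geq V^\psi_u(p;s)$ to strip the prefix down to the bare state $s$, obtaining $V^\psi_u(s)\geq V^\pi(s)$. Your approach instead proves the node-wise bound $V^\psi_u(p;s)\geq V^\pi(s)$ directly by inducting on depth, lower-bounding the max by the $\pi(s)$-branch at each internal node.

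The notable payoff of your route is that it never invokes monotonicity: $\pi$-consistency and $u=V^\pi$ alone suffice for the induction to go through, so you have in fact established a strictly stronger statement than Proposition~\ref{prop:chaining} as written. The paper's chaining genuinely needs monotonicity because Lemma~\ref{lemma:monotonicity} uses $\psi(p;s)\subseteq\psi(\lrcorner p;s)$ in its inductive step; your argument sidesteps that lemma entirely. One small correction to your third paragraph: monotonicity is not used to keep the stripped path ``$\psi$-satisfying'' (Lemma~\ref{lemma:monotonicity} is stated for arbitrary paths), but rather inside the proof of that lemma to compare the action sets at $p;s$ and $\lrcorner p;s$. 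This is cosmetic and does not affect your main argument.
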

\begin{proof}
Let $p;s$ be a path such that $|p;s| = k$ and $\psi(p;s) = \emptyset$. By definition, $V^\psi_u(p;s) = V^\pi(s)$. Let $\lrcorner^i p;s$ denote the path obtained from $p;s$ by removing the first $i$ state-action pairs of $p;s$. 
\begin{small}
\begin{flalign*}
V^\pi(s) &= V^\psi_u(p;s) \hspace{18em} \\
& \leq V^\psi_u(\lrcorner^i p;s) \leq V^\psi_u(\lrcorner^{i+1} p;s), \hspace{0.5em} 1 \leq i < k, \text{\emph{ by lemma \ref{lemma:monotonicity}}}\\
& = V^\psi_u(s)
\end{flalign*}
\end{small}
\end{proof}

\begin{lemma}
\label{lemma:subsume}
For stationary choice functions $\psi$ and $\psi'$, if $\psi$ and $\psi'$ have the same set of leaf paths and $\psi \supseteq \psi'$,
then for any path $p;s$ such that $|p;s| \leq H(\psi')$ and leaf evaluation function $u$, $V^\psi_u(p;s) \geq V^{\psi'}_u(p;s)$.
\end{lemma}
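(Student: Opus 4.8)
The plan is to prove the inequality by backward induction on the path length $|p;s|$, running from the horizon down to the root. First I would record two preliminary facts. Since $H(\psi)$ and $H(\psi')$ each equal the maximum length of a leaf path, and $\psi$ and $\psi'$ share the same set of leaf paths, we have $H(\psi)=H(\psi')$; write $H$ for this common value. Also, because $\psi \supseteq \psi'$, every $\psi'$-satisfying path is $\psi$-satisfying, so the two quantities $V^\psi_u(p;s)$ and $V^{\psi'}_u(p;s)$ are simultaneously defined exactly on $\psi'$-satisfying paths, and it is these I would induct over.

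For the base case, take $|p;s|=H$. A $\psi'$-satisfying path of length $H$ cannot be extended (else $H$ would not be maximal), so $\psi'(p;s)=\emptyset$, i.e.\ $p;s$ is a leaf path of $\psi'$. By the shared-leaf-path hypothesis it is also a leaf path of $\psi$, hence $\psi(p;s)=\emptyset$ and both sides equal $u(s)$, giving equality.

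For the inductive step, assume $V^\psi_u(q;t)\geq V^{\psi'}_u(q;t)$ for all $\psi'$-satisfying $q;t$ with $|q;t|=k+1$, and let $p;s$ be $\psi'$-satisfying with $|p;s|=k$. If $\psi'(p;s)=\emptyset$ then, as in the base case, the shared-leaf-path hypothesis forces $\psi(p;s)=\emptyset$ and both values equal $u(s)$. Otherwise $\psi'(p;s)\neq\emptyset$, and since $\psi(p;s)\supseteq\psi'(p;s)$ both choice sets are nonempty. For each $a\in\psi'(p;s)$ and each successor $s'$, the path $p;s;a;s'$ is $\psi'$-satisfying of length $k+1$, so the induction hypothesis gives $V^\psi_u(p;s;a;s')\geq V^{\psi'}_u(p;s;a;s')$; multiplying by the nonnegative weights $\gamma P_{ss'}(a)$ and adding $R(s,a)$ yields $Q^\psi_u(p;s;a)\geq Q^{\psi'}_u(p;s;a)$ for every such $a$. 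Taking the maximum over $a\in\psi'(p;s)$ and then enlarging the index set to $\psi(p;s)\supseteq\psi'(p;s)$, which can only raise the maximum, gives $V^{\psi'}_u(p;s)\leq \max_{a\in\psi'(p;s)}Q^\psi_u(p;s;a)\leq \max_{a\in\psi(p;s)}Q^\psi_u(p;s;a)=V^\psi_u(p;s)$, completing the induction.

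The step I expect to require the most care is the leaf case $\psi'(p;s)=\emptyset$. Subsumption alone is useless there, since $\psi(p;s)\supseteq\emptyset$ is vacuous and would in principle permit $\psi$ to keep expanding below a node where $\psi'$ stops; the continued value $V^\psi_u$ could then fall below $u(s)$ and violate the claim. It is precisely the shared-leaf-path hypothesis that rules this out by forcing $\psi(p;s)=\emptyset$ as well, so I would be careful to invoke it (rather than subsumption) at every leaf of $\psi'$. The remaining content is the routine monotonicity of expectimax backups under an enlarging action set, which the induction above handles directly.
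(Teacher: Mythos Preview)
Your proof is correct and follows essentially the same approach as the paper: backward induction on the path length from $H$ down to $0$, handling leaves via the shared-leaf-path hypothesis and internal nodes via the induction hypothesis together with monotonicity of the max under enlarging the action set. Your write-up is in fact tidier than the paper's printed proof, whose Case~(2) chain has both inequality signs inadvertently reversed; your careful remark that it is the shared-leaf-path hypothesis (not subsumption) that forces $\psi(p;s)=\emptyset$ whenever $\psi'(p;s)=\emptyset$ is exactly the point needed there.
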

\begin{proof}
The proof is  by  induction on $H(\psi')$. Since $\psi \supseteq \psi'$ and $\psi'(p;s) = \emptyset \Rightarrow \psi(p;s) = \emptyset$, we have $H(\psi') = H(\psi)$.\\

\noindent \emph{Induction Basis. } Let $p;s$ be a path such that $|p;s| = H(\psi')$. Since $\psi(p;s) = \psi'(p;s) = \emptyset$,
\begin{flalign*}
V^{\psi}_u(p;s) = V^{\psi'}_u(p;s) = u(s).  \hspace{6em}
\end{flalign*}
The lemma holds for $|p;s| = H(\psi')$.\\

\noindent \emph{Induction Hypothesis. } Assume that the lemma holds for $|p;s| = k+1 < H(\psi')$.\\

\noindent \emph{Inductive Proof. } Let $p;s$ be a path with $|p;s| = k$. \\ 
\noindent Case $(1).$ If $\psi(p;s) = \emptyset$ then 
\begin{flalign*}
V^\psi_u(p;s) = V^{\psi'}_u(p;s) = u(s). \hspace{8em}\\
\end{flalign*}
\noindent Case $(2).$ If $\psi(p;s) \neq \emptyset$ then
\begin{small}
\begin{flalign*}
V^\psi_u(p;s) &= \displaystyle \max_{a \in \psi(p;s)} R(s, a) + \gamma \displaystyle \sum_{s' \in S} P_{ss'}(a) \cdot V^\psi_u(p;s;a;s') \hspace{2em} \\
& \leq \displaystyle \max_{a \in \psi(p;s)} R(s, a) + \gamma \displaystyle \sum_{s' \in S} P_{ss'}(a) \cdot V^{\psi'}_u(p;s;a;s'), \\
& \hspace{11em} \text{\emph{by the induction hypothesis}} \\
& \leq \displaystyle \max_{a \in \psi'(p;s)} R(s, a) + \gamma \displaystyle \sum_{s' \in S} P_{ss'}(a) \cdot V^{\psi'}_u(p;s;a;s'), \\
& \hspace{12.5em} \text{\emph{since }} \psi(p;s) \supseteq \psi'(p;s)\\
& = V^{\psi'}_u(p;s).
\end{flalign*}
\end{small}
\end{proof}

\setcounter{theorem}{1}
\begin{theorem}
Let $\Psi = (\psi_1, \psi_2, \hdots)$ be a non-stationary choice function such that each component choice function $\psi_t$ is monotonic and $\pi$-consistent and $\lVert u - V^\pi \rVert_\infty = \epsilon_\pi$. If all $\psi_t$ have the same set of leaf paths and for each time step $t$, $\psi_{t+1} \supseteq \psi_t$, then for $\pi' = \Pi^\psi_u$,
\begin{flalign*}
V^\pi(s) - V^{\pi'}(s) \leq \dfrac{2 \epsilon_\pi \gamma^{h(\psi_1)}}{1 - \gamma}. \hspace{6em}
\end{flalign*}
 for all $s \in S$.
\end{theorem}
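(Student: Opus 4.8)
The plan is to replay the proof of Theorem~\ref{thm:stationary2} while carrying the time index $t$ through every step, substituting the subsumption relation for the places where stationarity was previously exploited. First I would record two structural consequences of the hypotheses: since all $\psi_t$ share the same set of leaf paths, they share a common min- and max-horizon, so $h(\psi_t)=h(\psi_1)=:h$ and $H(\psi_t)=H(\psi_1)$ for every $t$. Write $\pi'_t=\Pi^{\psi_t}_u$ for the action rule used at step $t$, let $V_t:=V^{\psi_t}_{u,0}$ be the root-value vector of the tree built at step $t$, and let $V^{\pi'}_t$ be the value of running the online procedure starting at step $t$; these satisfy the non-stationary Bellman identity $V^{\pi'}_t=B_{\pi'_t}[V^{\pi'}_{t+1}]$. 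The quantity the theorem bounds is $V^\pi - V^{\pi'}_1$.

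The central new ingredient is a non-stationary analogue of Lemma~\ref{lemma:qadvantage}: I would prove $V_t - B_{\pi'_t}[V_{t+1}] \le \epsilon_\pi\gamma^{h}(1+\gamma)$ for every $t$. Expanding $V_t(s)=Q^{\psi_t}_u(s;\pi'_t(s))$ and the definition of $B_{\pi'_t}$, the difference equals $\gamma\sum_{s'}P_{ss'}(\pi'_t(s))\,(V^{\psi_t}_{u,1}(s;\pi'_t(s);s')-V_{t+1}(s'))$. Passing from $u$ to $\bar u=V^\pi$ with Lemma~\ref{lemma:contraction} (using the common horizon $h$) replaces each child value by its $\bar u$-counterpart up to additive errors $\epsilon_\pi\gamma^{h-1}$ and $\epsilon_\pi\gamma^{h}$. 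The crux is then the chained inequality
\[
V^{\psi_t}_{\bar u,1}(s;\pi'_t(s);s') \;\le\; V^{\psi_t}_{\bar u,0}(s') \;\le\; V^{\psi_{t+1}}_{\bar u,0}(s'),
\]
whose first step is monotonicity of $\psi_t$ via Lemma~\ref{lemma:monotonicity} and whose second step is the subsumption $\psi_{t+1}\supseteq\psi_t$ with shared leaf paths via Lemma~\ref{lemma:subsume}. This is precisely where both the per-step monotonicity hypothesis and the subsumption hypothesis are needed simultaneously; it makes the $\bar u$-difference nonpositive and leaves only the additive error $\epsilon_\pi\gamma^{h}(1+\gamma)$.

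Next I would establish a non-stationary analogue of Lemma~\ref{lemma:ldiff}. Setting $D_t:=V_t-V^{\pi'}_t$ and $\delta:=\epsilon_\pi\gamma^{h}(1+\gamma)$, adding and subtracting $B_{\pi'_t}[V_{t+1}]$ and using $V^{\pi'}_t=B_{\pi'_t}[V^{\pi'}_{t+1}]$ yields the recursion $D_t \le \delta\mathbf{1} + \gamma P^{\pi'_t} D_{t+1}$. Iterating this $n$ times and letting $n\to\infty$, the factor $\gamma^n$ multiplying a uniformly bounded $D_{t+n}$ (all policy values are bounded by $R_{\max}/(1-\gamma)$) vanishes, and since each $P^{\pi'_t}$ is stochastic it fixes constant vectors, so the accumulated error telescopes to $D_t \le \frac{\delta}{1-\gamma}\mathbf{1}$. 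In particular $V_1 - V^{\pi'}_1 \le \frac{\epsilon_\pi\gamma^{h}(1+\gamma)}{1-\gamma}$, the non-stationary counterpart of Proposition~\ref{prop:policyvalue}.

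Finally I would assemble the bound exactly as in Theorem~\ref{thm:stationary2}, specialized to $t=1$: decompose $V^\pi - V^{\pi'}_1 = (V^\pi - V_1) + (V_1 - V^{\pi'}_1)$, bound the first term by $\epsilon_\pi\gamma^{h}$ using Lemma~\ref{lemma:contraction} together with $V^{\psi_1}_{\bar u,0}\ge V^\pi$ from Proposition~\ref{prop:chaining} applied to the monotonic, $\pi$-consistent $\psi_1$, and bound the second term by the displayed non-stationary policy-value inequality; summing gives $\frac{2\epsilon_\pi\gamma^{h(\psi_1)}}{1-\gamma}$. I expect the main obstacle to be the chained monotonicity-plus-subsumption step, which is the only genuinely new argument, together with the care needed to justify the infinite unrolling of the non-stationary recursion (boundedness of the tail and stochasticity of the $P^{\pi'_t}$); the remaining bookkeeping is a routine re-indexing of the stationary proof.
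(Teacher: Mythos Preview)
Your proposal is correct and follows essentially the same route as the paper. The only cosmetic differences are organizational: the paper first applies the stationary Lemma~\ref{lemma:qadvantage} to each $\psi_t$ to obtain $V_t-B_{\pi'_t}[V_t]\le\delta$ and then invokes subsumption (Lemma~\ref{lemma:subsume}) with the leaf function $u$ to replace $V_t$ by $V_{t+1}$ on the right, whereas you fold the subsumption (applied with $\bar u$) directly into a cross-time version of that lemma; and the paper finishes via the explicit telescoping sum $V^{\pi'}=\sum_t[\prod_k\gamma P^{\pi'_k}]R^{\pi'_t}$, which is exactly your iterated recursion $D_t\le\delta\mathbf{1}+\gamma P^{\pi'_t}D_{t+1}$ written out.
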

\begin{proof}
Let $\pi'_t = \Pi^{\psi_t}_u$ denote the stationary online policy for the component choice function $\psi_t$ and leaf evaluation function $u$. Let $P^{\pi'_t}$, $R^{\pi'_t}$ and $V^{\pi'_t}$ be the state-transition matrix, the reward and value functions of policy $\pi'_t$. From lemma \ref{lemma:qadvantage}, 
\begin{flalign*}
V_{u, 0}^{\psi_t} - B_{\pi'_t}[V_{u, 0}^{\psi_t}] \leq \epsilon_\pi \gamma^{h(\psi_t)} (1 + \gamma). \hspace{2em}
\end{flalign*}
Since $\psi_{t+1} \supseteq \psi_t$ and $\psi_1(p;s) = \emptyset \Rightarrow \psi_t(p;s) = \emptyset$ for $t \in \{1, 2, \hdots \}$, the min-horizon $h(\psi_t)$ is the same for all the component stationary choice functions $\psi_t$. Therefore,
\begin{flalign*}
\hspace{1em} &V_{u, 0}^{\psi_t} - B_{\pi'_t}[V_{u, 0}^{\psi_t}] \leq \epsilon_\pi \gamma^{h(\psi_1)} (1 + \gamma) \hspace{11em} \\
\Rightarrow & V_{u, 0}^{\psi_t} - R^{\pi'_t} - \gamma P^{\pi'_t} V_{u, 0}^{\psi_t} \leq \epsilon_\pi \gamma^{h(\psi_1)} (1 + \gamma), \\
& \hspace{4em} \text{\emph{since }} B_\pi[V] = R^\pi + \gamma P^\pi V \text{\emph{ for any policy }} \pi \\
\Rightarrow & R^{\pi'_t} \geq V_{u, 0}^{\psi_t} - \gamma P^{\pi'_t} V_{u, 0}^{\psi_t} - \epsilon_\pi \gamma^{h(\psi_1)} (1 + \gamma).
\end{flalign*}
Since $V_{u, 0}^{\psi_{t+1}} \geq V_{u, 0}^{\psi_t}$ \text{\emph{ by lemma \ref{lemma:subsume}}},
\begin{flalign*}
R^{\pi'_t} \geq - \epsilon_\pi \gamma^{h(\psi_1)} (1 + \gamma) + V_{u, 0}^{\psi_t} - \gamma P^{\pi'_t} V_{u, 0}^{\psi_{t+1}}. \hspace{3em} \\
\end{flalign*}
Let $\bar{u} = V^\pi$ to simplify notation. The value function of the online non-stationary policy is
\begin{small}
\begin{flalign*}
V^{\pi'} &= \displaystyle \sum_{t=1}^\infty \bigg [ \prod_{k=1}^{t-1} \gamma P^{\pi'_k} \bigg ] R^{\pi'_t} \hspace{17em} \\
& \hspace{-1.5em} \geq \displaystyle \sum_{t=1}^\infty \bigg [ \prod_{k=1}^{t-1} \gamma P^{\pi'_k} \bigg ] [- \epsilon_\pi \gamma^{h(\psi_1)} (1 + \gamma) + V_{u, 0}^{\psi_t} -  \gamma P^{\pi'_t} V_{u, 0}^{\psi_{t+1}} ] \\
& \hspace{-0.5em}  = \dfrac{- \epsilon_\pi \gamma^{h(\psi_1)} (1 + \gamma)}{1 - \gamma} + \displaystyle \sum_{t=1}^\infty \bigg [ \prod_{k=1}^{t-1} \gamma P^{\pi'_k} \bigg ] V_{u, 0}^{\psi_t} - \\
& \hspace{9em} \displaystyle \sum_{t=1}^\infty \bigg [ \prod_{k=1}^{t-1} \gamma P^{\pi'_k} \bigg ] ( \gamma P^{\pi'_t} V_{u, 0}^{\psi_{t+1}})\\
& \hspace{-0.5em} = \dfrac{- \epsilon_\pi \gamma^{h(\psi_1)} (1 + \gamma)}{1 - \gamma} + V_{u, 0}^{\psi_1} + \displaystyle \sum_{t=2}^\infty \bigg [ \prod_{k=1}^{t-1} \gamma P^{\pi'_k} \bigg ] V_{u, 0}^{\psi_t}  -  \\
& \hspace{9em} \displaystyle \sum_{t=1}^\infty \bigg [ \prod_{k=1}^{t} \gamma P^{\pi'_k} \bigg ] V_{u, 0}^{\psi_{t+1}}
\end{flalign*}
\end{small}
\begin{small}
\begin{flalign*}
\hspace{1em} & = \dfrac{- \epsilon_\pi \gamma^{h(\psi_1)} (1 + \gamma)}{1 - \gamma} + V_{u, 0}^{\psi_1}, \text{\emph{ cancelling out terms $2$ and $4$}} \\
& \geq \dfrac{- \epsilon_\pi \gamma^{h(\psi_1)} (1 + \gamma)}{1 - \gamma} + V_{\bar{u}, 0}^{\psi_1} - \epsilon_\pi \gamma^{h(\psi_1)}, \hspace{1.7em} \text{\emph{ by lemma \ref{lemma:contraction}}} \\
&  \geq \dfrac{- \epsilon_\pi \gamma^{h(\psi_1)} (1 + \gamma)}{1 - \gamma} + V^\pi - \epsilon_\pi \gamma^{h(\psi_1)}, \text{\emph{ since }} V_{\bar{u}, 0}^{\psi_1} \geq V^\pi \\
&  \hspace{17.8em} \text{\emph{by proposition \ref{prop:chaining}}} \\
& = \dfrac{- 2 \epsilon_\pi \gamma^{h(\psi_1)}}{1 - \gamma} + V^\pi. \\
\end{flalign*}
\end{small}
Therefore, we get
\begin{flalign*}
V^\pi - V^{\pi'} \leq \dfrac{2 \epsilon_\pi \gamma^{h(\psi_1)}}{1 - \gamma}. \hspace{9em}
\end{flalign*}
\end{proof}

\begin{theorem}
For LDCF parameters $\theta=(\pi, H, K, D, \Delta)$, if $\Delta$ is depth monotonic, then $\psi_{\theta}$ is monotonic.
\end{theorem}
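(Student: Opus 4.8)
The plan is to verify the defining inequality of monotonicity, $\psi_\theta(p;s) \subseteq \psi_\theta(\lrcorner p;s)$, directly by a case analysis on which of the three branches of the LDCF definition produces $\psi_\theta(p;s)$. Throughout I fix a path $p;s$ with $|p| \geq 1$ (the only case in which $\lrcorner p;s$ is defined), write $d = |p|$ for its depth and $n = \#[p \neq \pi]$ for its discrepancy count, and record the two key facts about truncation: removing the first state-action pair gives $|\lrcorner p| = d-1$, and it changes the discrepancy count by at most one, downward, so $\#[\lrcorner p \neq \pi] \le n$ (with equality when the removed action is on-policy, and $n-1$ otherwise). The whole argument rests on the monotone behaviour of both quantities under truncation.

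First I dispatch the two easy branches. If $|p| = H$ then $\psi_\theta(p;s) = \emptyset$, which is contained in every set, so the inclusion is immediate. If instead $\psi_\theta(p;s) = \{\pi(s)\}$ (the middle branch, applying when $D < |p| < H$ or $n = K$, with $|p| \neq H$), then $d < H$, so $|\lrcorner p| = d-1 < H$ and hence $\lrcorner p;s$ is not a leaf of $\psi_\theta$; whichever branch its choice set comes from, it always contains $\pi(s)$, giving $\{\pi(s)\} \subseteq \psi_\theta(\lrcorner p;s)$.

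The substantive case is the first branch, where $|p| \le D$ and $n < K$, so that $\psi_\theta(p;s) = \Delta(s,d) \cup \{\pi(s)\}$. Here I use the two truncation facts to show that $\lrcorner p;s$ also lands in the first branch: its depth satisfies $d-1 \le D$ and its discrepancy count satisfies $\#[\lrcorner p \neq \pi] \le n < K$, whence $\psi_\theta(\lrcorner p;s) = \Delta(s, d-1) \cup \{\pi(s)\}$. Depth monotonicity of $\Delta$ then yields $\Delta(s,d) \subseteq \Delta(s, d-1)$, so $\Delta(s,d) \cup \{\pi(s)\} \subseteq \Delta(s,d-1) \cup \{\pi(s)\}$, which is exactly the desired inclusion.

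Having covered all three branches, the inclusion holds for every $p;s$, establishing monotonicity. The only place that requires any care---and the step I regard as the crux---is checking that truncation cannot push a node out of the first branch into a more restrictive one: this is precisely where I need both that the depth only decreases and that the discrepancy count never increases under $\lrcorner$. Once that observation is in hand, depth monotonicity of $\Delta$ does the real work, and the remainder is bookkeeping.
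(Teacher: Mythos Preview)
Your proof is correct and follows essentially the same approach as the paper's: both arguments rest on the observation that truncation by $\lrcorner$ decreases the depth by one and never increases the discrepancy count, so a node can only move ``upward'' among the three branches, and then depth monotonicity of $\Delta$ handles the top branch. Your presentation is organized by the branch of $p;s$ rather than by whether $\lrcorner p;s$ stays in the same branch or moves, but the underlying reasoning is identical.
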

\begin{proof}
Consider any path $p;s$ such that $1\leq |p;s| \leq H$ and let $\lrcorner p;s = p';s$. We must show that $\psi_{\theta}(p;s)\subseteq \psi_{\theta}(p';s)$. The definition of $\psi_{\theta}$ has three cases, which condition on the input path's length and number of discrepancies. $p';s$ never increases those quantities compared to $p;s$, since $|p'|=|p|-1$ and $p'$ does not contain state-action pairs that are not in $p$ and hence cannot increase the number of discrepancies. This means that there are two cases to consider: 1) $p';s$ satisfies a condition higher in the order than $p;s$, or 2) $p';s$ satisfies the same condition as $p;s$. For (1), $\psi_{\theta}(p;s)\subseteq \psi_{\theta}(p';s)$ is clearly satisfied when moving either from the bottom to middle or middle to top condition. For (2) we have $\psi_{\theta}(p;s) = \psi_{\theta}(p';s)$ for the bottom and middle conditions. The top condition gives $\psi_{\theta}(p;s)\subseteq \psi_{\theta}(p';s)$ due to the fact that $\Delta$ is depth monotonic. \\
\end{proof}

\setcounter{proposition}{2}
\begin{proposition}
Let $\psi_{\theta}$ be an LDCF with $\theta=(\pi, H, K, D, \Delta)$, such that $\Delta(s)\leq W$ for any $s\in S$. The number of leaf nodes in $T^{\psi_{\theta}}$ with state branching factor $C$ is upper bounded by $2C^H$ for $(D+1)W=1$ and otherwise by $\frac{\left((D+1)W\right)^{K+1}-1}{(D+1)W-1}C^H = O\left((DW)^KC^H\right)$.
\end{proposition}
\begin{proof}
The number of leaf nodes is bounded by the number of root-to-leaf paths in $T^{\psi_{\theta}}$. Each path is a sequence of alternating state and action nodes containing $H+1$ states and $H$ actions. Each path has at most $K$ discrepancies, each discrepancy being the choice of one of the first $D+1$ action nodes and one of at most $W$ discrepancies returned by $\Delta$. Thus, the total number of combinations of $K$ discrepancies is bounded by $\left((D+1)W\right)^K$. This bounds the combinations of $K$ or fewer discrepancies by $\frac{\left((D+1)W\right)^{K+1}-1}{(D+1)W-1}$ if $(D+1)W > 1$ and by $2$ if $(D+1)W=1$. For each discrepancy combination the number of paths is bounded by the number of ways to assign values to the $H$ state nodes, which is no greater than $C^H$. Multiplying this with the number of discrepancy combinations completes the proof. \\
\end{proof}

\subsection*{Forward Search Sparse Sampling for Choice Functions}
FSSS \cite{Walsh:2010:ISP:2898607.2898706} is a computationally efficient implementation of Sparse Sampling (SS) \cite{Kearns:2002:SSA:599616.599698}. SS simply builds a search tree of depth $H$ rooted at the current state $s$, with exactly $C$ successor state-nodes for every action-node, using a simulator of the true MDP. The root action values are computed and used to select the best action. 
FSSS improves over SS by incrementally constructing the tree via trajectory rollouts as in MCTS algorithms. This allows for sound pruning of the tree by incrementally maintaining intervals $[L_d(s), U_d(s)]$ and $[L_d(s,a), U_d(s,a)]$ at each depth $d$ state-node and action-node respectively. The intervals are guaranteed to contain the state and action values at those nodes, which supports pruning. For example, an action node whose upper bound is lower than another action's lower bound can be safely ignored.  

Adapting FSSS to search over at tree $T^{\psi}$ for any search function is straightforward. Since FSSS expands the tree via root-to-leaf rollouts, FSSS simply is adapted to only expand actions allowed by the choice function along the path of each rollout. For the LDCF family our implementation has an additional efficiency enhancement. Two paths $p;s$ and $p';s$ with equal numbers of discrepancies have $\psi_{\theta}(p;s)=\psi_{\theta}(p';s)$ and thus can be merged to get a Directed Acyclic Graph (DAG) rather than a tree. The DAGs can be significantly more compact than the corresponding trees, which leads to substantially faster search. 
\end{document}